\documentclass{article}

\usepackage{microtype}
\usepackage{graphicx}
\usepackage{amsfonts}

\usepackage{subcaption}
\usepackage{booktabs} 
\usepackage{enumitem}
\usepackage{hyperref}
\usepackage{siunitx} 
\sisetup{
  detect-weight=true,  
  table-format=1.3     
}
\usepackage{makecell} 
\usepackage{caption} 


\usepackage[accepted]{icml2024}


\usepackage{amsmath}
\usepackage{amssymb}
\usepackage{mathtools}
\usepackage{amsthm}

\usepackage[capitalize,noabbrev]{cleveref}

\theoremstyle{plain}
\newtheorem{theorem}{Theorem}[section]
    \newtheorem{proposition}[theorem]{Proposition}
\newtheorem{lemma}[theorem]{Lemma}

\theoremstyle{definition}
\newtheorem{definition}[theorem]{Definition}

\usepackage{graphicx} 
\usepackage{subcaption} 
\usepackage[textsize=tiny]{todonotes}

\icmltitlerunning{Mitigating Privacy Risk in Membership Inference by Convex-Concave Loss}
        
\begin{document}

\twocolumn[
\icmltitle{Mitigating Privacy Risk in Membership Inference by Convex-Concave Loss}



\begin{icmlauthorlist}
\icmlauthor{Zhenlong Liu}{sustech}
\icmlauthor{Lei Feng}{sutd}
\icmlauthor{Huiping Zhuang}{scut}
\icmlauthor{Xiaofeng Cao}{jilin}
\icmlauthor{Hongxin Wei}{sustech}
\end{icmlauthorlist}

\icmlaffiliation{sustech}{Southern University of Science and Technology}
\icmlaffiliation{scut}{South China University of Technology}
\icmlaffiliation{sutd}{Singapore University of Technology and Design}
\icmlaffiliation{jilin}{Jilin University}

\icmlcorrespondingauthor{Hongxin Wei}{weihx@sustech.edu.cn}

\icmlkeywords{Machine Learning, ICML}

\vskip 0.3in
]



\printAffiliationsAndNotice{}
\begin{abstract}

Machine learning models are susceptible to membership inference attacks (MIAs), which aim to infer whether a sample is in the training set. 
Existing work utilizes gradient ascent to enlarge the loss variance of training data, alleviating the privacy risk.
However, optimizing toward a reverse direction may cause the model parameters to oscillate near local minima, leading to instability and suboptimal performance.
In this work, we propose a novel method -- Convex-Concave Loss (\textbf{CCL}), which enables a high variance of training loss distribution by gradient descent.
Our method is motivated by the theoretical analysis that convex losses tend to decrease the loss variance during training.
Thus, our key idea behind CCL is to reduce the convexity of loss functions with a concave term.
Trained with CCL, neural networks produce losses with high variance for training data, reinforcing the defense against MIAs. 
Extensive experiments demonstrate the superiority of CCL, achieving a state-of-the-art balance in the privacy-utility trade-off.
\end{abstract}

\section{Introduction}
\label{sec:intro}

Deep Neural Networks (DNNs) have achieved tremendous performance for various learning tasks, with sufficient capacity~\cite{ijcai2021p467}. The powerful capability enables models to memorize information of training data~\cite{Zhang2017}, therefore being highly susceptible to membership inference attacks (MIAs) \cite{shokri2017membership}. In particular, membership inference attacks are designed to infer whether a sample is included in the training set of a target model. Such attacks can increase the risks of violating privacy regulations, making it challenging to apply machine learning techniques in sensitive applications, like health care~\cite{paul2021defending}, financial service~\cite{mahalle2018data}, and DNA sequence analysis~\cite{arshad2021analysis}. This gives rise to the importance of designing robust algorithms to secure DNNs from MIAs.


In the literature, sample loss has been a representative metric of membership inference attacks. In particular, a large gap in the expected loss values on the member (training) and non-member data is proved to be sufficient for attacks \cite{yeom2018privacy}. Consequently, defenders can mitigate the privacy risk by reducing distinguishability between the member and non-member loss distributions. 
Recently, RelaxLoss~\cite{chen2022relaxloss} utilizes gradient ascent to promote a high variance of training loss distribution, which is shown to strengthen the defense against MIAs. 
However, optimizing toward a reverse direction may cause the model parameters to oscillate near local minima, leading to instability and suboptimal performance.
This motivates our method, which enables us to enlarge the training loss variance via gradient descent.



 In this work, we propose a novel and generalized method -- Convex-Concave Loss (\textbf{CCL}), by integrating a concave term into convex losses. Our method is motivated by a theoretical analysis of the connection between the convexity of loss functions and the resulting loss variance. We demonstrate that convex loss functions are optimized to encourage a small loss variance (see Theorem~\ref{convex loss}), being vulnerable to membership inference attacks. On the contrary, concave functions can increase the loss variance during gradient descent, which are expected to mitigate privacy risk.

 
Thus, our key idea behind CCL is to decrease the convexity of loss functions for a large loss variance.
This can be achieved by incorporating a concave term into the original convex losses, e.g., cross-entropy loss.
In effect, the resulting loss weakens the convexity of the original convex loss at the late stage of training and can converge to the optimum of the convex loss.
Trained with CCL, the network tends to produce losses with high variance for training data, reducing the differentiability of sample losses between the member and non-member data.



To verify the effectiveness of our method, we conduct extensive evaluations on five datasets, including Texas100 \cite{TexasInpatient2006}, Purchase100~\cite{KaggleValuedShoppers2014}, CIFAR-10/100~\cite{krizhevsky2009learning}, and ImageNet~\cite{russakovsky2015imagenet} datasets. The results demonstrate our methods can improve utility-privacy trade-offs across a variety of attacks based on neural network, metric, and data augmentation. For example, our method formulated using a concave quadratic function, significantly diminishes the attack advantage in loss-metric-based from 29.67\% to 18.40\% - a relative reduction of 62.01\% in privacy risk, whilst preserving the test accuracy not worse than the vanilla model. Our code is available at \href{https://github.com/ml-stat-Sustech/ConvexConcaveLoss}{https://github.com/ml-stat-Sustech/ConvexConcaveLoss}.

Our contributions are summarized as follows:

\begin{enumerate}
    \item We introduce the concept of Convex-Concave Loss (CCL), a generalized loss function that incorporates a concave term into the original convex loss, i.e., Cross-Entropy (CE) loss. This approach stands as a novel and effective countermeasure against MIAs.
    \item We provide rigorous theoretical analyses to establish a key insight: convex loss functions tend to decrease the loss variance. In contrast, concave functions can enlarge the variance of the training loss distribution.
    \item We establish that CCL offers a state-of-the-art balance in the privacy-utility trade-off, with extensive experiments on Texas100, Purchase100, CIFAR-10/100, and ImageNet datasets with diverse model architectures.
\end{enumerate}

\section{Background}
\paragraph{Setup.} In this paper, we study the problem of membership inference attacks in $K$-class classification tasks. Let the feature space be $\mathcal{X} \subset \mathbb{R}^{d}$ and the label space be $\mathcal{Y} = \{ 1,\dots,K\}$. 
Let us denote by $(\boldsymbol{x}, y) \in (\mathcal{X} \times \mathcal{Y})$ an example containing an instance $\boldsymbol{x}$ and a real-valued label $y$. 
Given a training dataset $\mathcal{S} = \{(\boldsymbol{x_n}, y_n) \}_{i=1}^N$ \textit{i.i.d.} sampled from the data distribution $\mathcal{D}$, our goal is to learn a model $h_S \in \mathcal{H}$, that minimizes the following expected risk:
\begin{align} \label{expected_risk}
    R(\mathcal{L}) = \mathbb{E}_{(\boldsymbol{x},y) \sim \mathcal{D}} [\mathcal{L}(h(\boldsymbol{x}),y)]
\end{align}
where $\mathbb{E}_{(\boldsymbol{x},y) \sim \mathcal{D}}$ denotes the expectation over the data distribution $\mathcal{D}$ and $\mathcal{L}$ is a conventional loss function (such as cross-entropy loss) for classification.


\paragraph{Membership Inference Attacks.} Given a data point $(\boldsymbol{x}, y)$ and a trained target model $h_\mathcal{S}$, attackers aim to identify if $(\boldsymbol{x}, y)$ is one of the members in the training set $\mathcal{S}$, which is called membership inference attacks (MIAs) \cite{shokri2017membership, yeom2018privacy, salem2018ml}. In MIAs, it is generally assumed that attackers can query the model predictions $h_S(\boldsymbol{x})$ for any instance $\boldsymbol{x}$. Here, we focus on standard black-box attacks \cite{truex2018towards}, where attackers can access the knowledge of model architecture and the data distribution $\mathcal{D}$. 

In the process of attack, the attacker has access to a query set $\mathcal{Q}=\left\{\left(\boldsymbol{z}_i, m_i\right)\right\}_{i=1}^J$, where $\boldsymbol{z}_i$ denotes the $i$th data point $(\boldsymbol{x}_i, y_i)$ and $m$ is the membership attribute of the given data point $(\boldsymbol{x}_i, y_i)$ in the training dataset $\mathcal{S}$, i.e., $m_i = \mathbb{I}[(\boldsymbol{x}_i, y_i) \in \mathcal{S}]$. In particular, the query set $\mathcal{Q}$ contains both member (training) and non-member samples, drawn from the data distribution $\mathcal{D}$. Then, the attacker $\mathcal{A}$ can be formulated as a binary classifier, which predicts $m_i \in \{0,1\}$ for a given example $(\boldsymbol{x}_i, y_i)$ and a target model $h_\mathcal{S}$.
To quantify the performance of the attack model $\mathcal{A}$, we use the \textit{membership advantage}~\cite{yeom2018privacy} :
\begin{align} \label{def: adv}
    \mathit{Adv}(\mathcal{A}) &:= \operatorname{Pr}(\mathcal{A}(h_\mathcal{S}(\boldsymbol{x}),y)=1 | m=1) \notag \\
    &\quad - \operatorname{Pr}(\mathcal{A}(h_\mathcal{S}(\boldsymbol{x}),y)=1 | m=0) \\
    &= 2\operatorname{Pr}(\mathcal{A}(h_\mathcal{S}(\boldsymbol{x}),y) = m )-1 \notag
\end{align}
Equivalently, $\mathit{Adv}(\mathcal{A})$ can be seen as the difference between $\mathcal{A}$'s true and false positive rates.
\paragraph{Loss variance.} In the literature, Theorem 3 in \citep{yeom2018privacy} shows that, for regression tasks, the \textit{membership advantage} can be approximated as $\mathrm{erf}(1 \slash \sqrt{2})-\mathrm{erf}(\sigma_{\mathcal{S}} \slash \sqrt{2} \sigma_{\mathcal{D}})$, where $\mathrm{erf}(x) = \frac{1}{\sqrt{\pi}} \int_{-z}^{x} \exp(-t^2) dt$, $\sigma_{\mathcal{S}}$ and $\sigma_{\mathcal{D}}$ denote the loss variances of member and non-member data over examples, respectively. This suggests that increasing the loss variance of training data $\sigma_{\mathcal{S}}$ can help decrease the membership advantage, i.e., enhance the defense against MIAs. We formally discuss the effect of loss variance on the attack advantage in Section~\ref{sec:discussion}.

Recently, RelaxLoss \cite{chen2022relaxloss} applies gradient ascent to increase the loss variance of the training data, thereby alleviating the privacy risk.
However, optimizing toward a reverse direction may cause the model parameters to oscillate near local minima, preventing convergence to the global optimum. Consequently, the inconsistency of optimizing directions over iterations will result in the suboptimal performance of the trained model in utility (see Figure~\ref{fig:cifar10_baselines}). This motivates us to design a loss function that increases $\sigma_{S}$ during gradient descent.
\section{Theoretical Motivation}
In this section, we begin with a formal analysis to show that cross-entropy loss tends to reduce $\sigma_{\mathcal
S}$ due to its convexity. Based on this, we propose concave functions, which are theoretically shown to increase $\sigma_{\mathcal{S}}$. 

For a sample $\boldsymbol{x} \in \mathcal{X}$, we denote the distribution over different labels by $q(k|\boldsymbol{x})$, the output probability of $h_S(\boldsymbol{x})$ by $p(k|\boldsymbol{x})$. For simplicity, we denote $p_k$, $q_k$ as abbreviations for $p(k|\boldsymbol{x})$ and $q(k|\boldsymbol{x})$, respectively. In particular, the confidence in the true label $p(y|\boldsymbol{x})$ is abbreviated as $p_y$. 
\subsection{Convex function decreases the loss variance} 
Here, we provide a formal analysis to show how the loss function influences the loss variance of training data. Given a certain model, we can view $p_y = p(y|\boldsymbol{x})$ as a random variable, which depends on the pair of random variables $(\boldsymbol{x}, y) \sim \mathcal{D}$.
Then, the training objective (\ref{expected_risk}) with cross-entropy loss $\ell_{\mathrm{ce}}$ can be rewritten as:
\begin{align*}
    \min_{h} \quad \mathbb{E}_{\mathcal{D}} [-\log p_y]
\end{align*}
where $\mathbb{E}_{\mathcal{D}}$ denotes the expectation over the data distribution $\mathcal{D}$.
Let $1-\epsilon$ and $\sigma^2$ be the mean and variance of $p_y$ over the data distribution $\mathcal{D}$, where $0<\epsilon <1$.
By Taylor expansion, we have 
\begin{align*}
\mathbb{E}_{\mathcal{D}}(-\log p_y)  &\geqslant 
\mathbb{E}_{\mathcal{D}} [(1-p_y)+ \frac{1}{2}(1-p_y)^2]
\\ &= \mathbb{E}_{\mathcal{D}} [(1-p_y)] + \frac{1}{2} \mathbb{E}_{\mathcal{D}}[(1-p_y)^2]
\\ &= \epsilon + \frac{1}{2}(\sigma^2+\epsilon^2)
\end{align*}
Thus, we obtain a lower bound for the expected value of training loss, which depends on $\epsilon$ and $\sigma^2$. It implies that the training loss can be optimized toward a smaller value of variance $\sigma^2$, corresponding to a smaller loss variance \footnote{The monotonic relationship between loss variance and $\sigma^2$ is proved in Appendix~\ref{app:variance_connection}.}.



Note that the above property of cross-entropy loss is dependent on the positive coefficient of $\sigma^2$, which can be calculated from the second-order derivation of cross-entropy loss. In other words, the relationship between cross-entropy loss and loss variance stems from its convexity. This insight can be formalized as follows:
\begin{theorem} \label{convex loss}
Given a twice continuously differentiable function $\ell \in C^2(0,1]$ such that $\ell(1) = 0$ and $\ell'(x) < 0, \forall x \in (0,1]$. If $\ell$ is strictly \textbf{convex}, then
\begin{align*} 
    \mathbb{E}_{\mathcal{D}}[\ell(p_y)]  \geqslant A\epsilon + \frac{B}{2}(\epsilon^2 + \sigma^2)
\end{align*}
where $A = -\ell^{\prime} (1)>0$, $B \geqslant 0$ is a non-negative lower bound of $\ell^{\prime \prime}(x)$. 
\end{theorem}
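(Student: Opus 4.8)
The plan is to mimic the cross-entropy calculation that precedes the theorem, but in a function-agnostic way, by performing a second-order Taylor expansion of $\ell$ about the point $x=1$ and controlling the remainder through convexity. The anchor point is natural: since $\ell(1)=0$, expanding at $1$ kills the constant term, and the hypothesis $A=-\ell'(1)>0$ is precisely the negated first-order coefficient.

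Concretely, I would invoke Taylor's theorem with Lagrange remainder: for each realization $p_y\in(0,1]$ there exists $\xi$ between $p_y$ and $1$ with
\begin{align*}
\ell(p_y)=\ell(1)+\ell'(1)(p_y-1)+\tfrac{1}{2}\ell''(\xi)(p_y-1)^2.
\end{align*}
Using $\ell(1)=0$, $\ell'(1)=-A$, and $(p_y-1)^2=(1-p_y)^2$, this rearranges to $\ell(p_y)=A(1-p_y)+\tfrac{1}{2}\ell''(\xi)(1-p_y)^2$. Because $\ell$ is strictly convex, $\ell''\geq 0$ on $(0,1]$, so a non-negative lower bound $B$ with $\ell''(\xi)\geq B$ exists (e.g.\ $B=\inf_{(0,1]}\ell''\geq 0$, which for the cross-entropy case $\ell=-\log$ equals $\ell''(1)=1$, recovering the motivating computation). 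Substituting the bound gives the pointwise inequality $\ell(p_y)\geq A(1-p_y)+\tfrac{B}{2}(1-p_y)^2$.

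Taking expectations over $(\boldsymbol{x},y)\sim\mathcal{D}$ and using linearity then finishes the argument. By definition the mean of $p_y$ is $1-\epsilon$, so $\mathbb{E}_{\mathcal{D}}[1-p_y]=\epsilon$, and since $\mathrm{Var}(1-p_y)=\mathrm{Var}(p_y)=\sigma^2$ we get $\mathbb{E}_{\mathcal{D}}[(1-p_y)^2]=\sigma^2+\epsilon^2$. Combining these yields $\mathbb{E}_{\mathcal{D}}[\ell(p_y)]\geq A\epsilon+\tfrac{B}{2}(\epsilon^2+\sigma^2)$, exactly the claimed bound.

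The computation itself is routine; the only points needing care are analytic rather than algebraic. First, the Lagrange remainder must be applied on the half-open domain $(0,1]$: since $\xi$ always lies between $p_y$ and $1$ it stays inside $(0,1]$ where $\ell\in C^2$, so the expansion is legitimate and no behaviour at the excluded endpoint $0$ is invoked. Second, I would make explicit why a non-negative $B$ is guaranteed: strict convexity of a $C^2$ function forces $\ell''\geq 0$, so the infimum of $\ell''$ over $(0,1]$ is a valid non-negative choice; should that infimum be $0$, the bound degenerates to $\mathbb{E}_{\mathcal{D}}[\ell(p_y)]\geq A\epsilon$, which remains correct and preserves the qualitative message that the variance coefficient is non-negative. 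This is the step I would treat most carefully, since it is where the convexity hypothesis enters and where the sign of the variance term, the whole point of the theorem, is decided.
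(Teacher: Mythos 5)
Your proposal is correct and follows essentially the same route as the paper's proof: a second-order Taylor expansion of $\ell$ about $x=1$ with Lagrange remainder, bounding $\ell''(\xi)$ below by $B=\inf_{(0,1]}\ell''\geqslant 0$, and then taking expectations using $\mathbb{E}_{\mathcal{D}}[1-p_y]=\epsilon$ and $\mathbb{E}_{\mathcal{D}}[(1-p_y)^2]=\sigma^2+\epsilon^2$. If anything, your write-up is slightly more careful than the paper's, since you correctly note that strict convexity of a $C^2$ function only guarantees $\ell''\geqslant 0$ (the paper asserts $\ell''>0$) and you explicitly check that the Lagrange point $\xi$ stays inside $(0,1]$.
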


The detailed proof is presented in the Appendix \ref{proof for convex loss}.
By Theorem \ref{convex loss}, we show that the coefficient of $\sigma^2$ is positive if the loss function is strictly convex with respect to $p_y$. Similar to cross-entropy loss, such convex loss functions will be optimized to encourage a smaller loss variance, being vulnerable to membership inference attacks. 

\begin{figure}[t] 
    \centering
    \begin{subfigure}[b]{0.24\textwidth} 
        \centering
        \includegraphics[width=\textwidth]{\detokenize{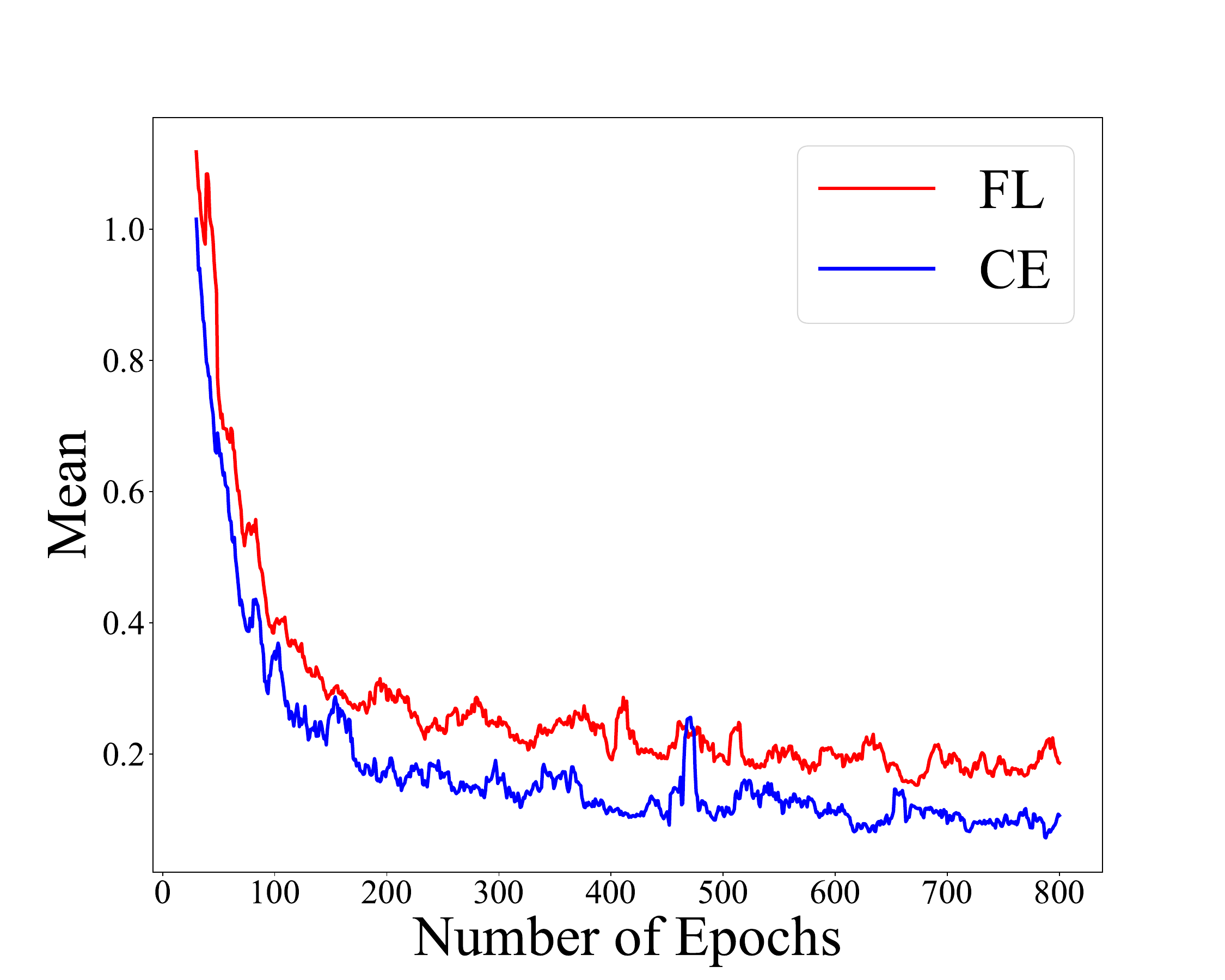}}
        \caption{Mean of Loss}
        \label{fig:cifar10_mean_sta_epoch_smoothed}
    \end{subfigure}%
    \hfill 
    \begin{subfigure}[b]{0.24\textwidth} 
        \centering
        \includegraphics[width=\textwidth]{\detokenize{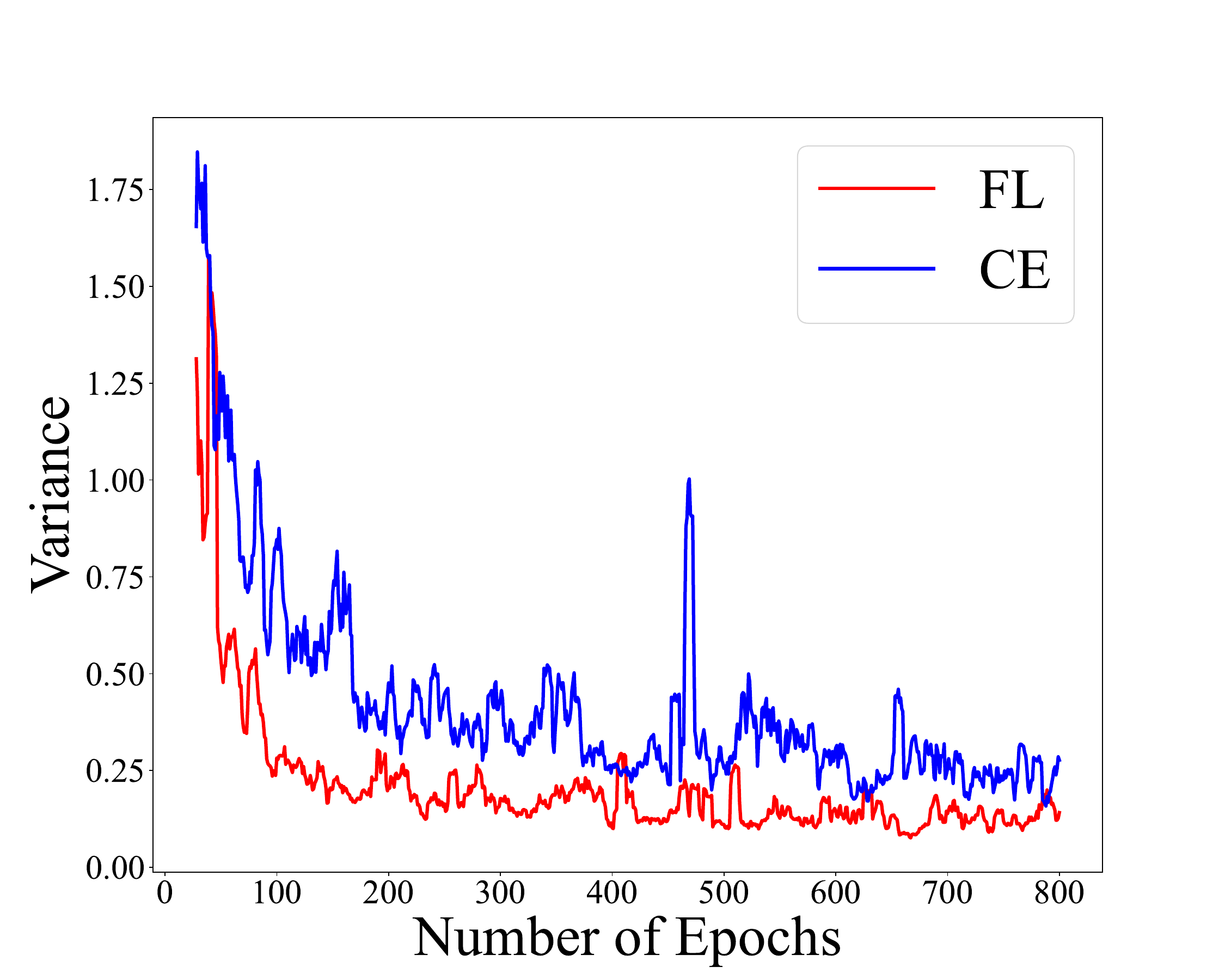}}
        \caption{Variance of Loss}
        \label{fig:cifar10_variance_sta_epoch_smoothed}
    \end{subfigure}
    \caption{The mean and variance of loss under different epochs. Models are trained on CIFAR-10 with Resnet-34 using Cross-entropy loss (CE) and Focal loss (FL).}
    \label{fig:focal_ce_sta}
\end{figure}

To provide a straightforward view, we empirically verify the connection between convexity and loss variance. In particular, we introduce Focal Loss \cite{lin2017focal}: $\ell_{\mathrm{fl}} = - (1-p_y)^{\gamma} \log(1-p_y)$ with $\gamma =2$, which exhibits stronger convexity than cross-entropy loss, i.e., $\ell_{\mathrm{fl}}^{\prime \prime} \geqslant \ell_{\mathrm{ce}}^{\prime \prime}, \forall x \in (0,1]$. 
In Figure \ref{fig:focal_ce_sta}, we plot the dynamics of mean and variance of losses for the training data during the training.
Indeed, Focal loss results in a smaller loss variance than cross-entropy loss, with a higher mean of losses. In this way, we verified that the convexity of loss functions contributes to the decrease of loss variance, which may exacerbate the privacy risk of neural networks. We proceed by exploring concave functions, targeting this problem.


\subsection{Can concave functions increase the loss variance?}
In our previous analysis, we show that convex loss can enlarge the privacy risk with a small loss variance. Conversely, we explore the properties of concave functions in the same setting, as shown below.

\begin{theorem}
    \label{concave loss}
    Given a twice continuously differentiable function $\ell \in C^2[0,1]$ such that $\ell(1) = 0$ and $\ell'(x) < 0, \forall x \in [0,1]$. If $\ell$ is strictly \textbf{concave}, there must exist a \textbf{negative} constant $B\leqslant0$ such that
\begin{align} 
\mathbb{E}_{\mathcal{D}}[\ell(p_y)]  = A\epsilon + B(\sigma^2+\epsilon^2)
\end{align}

where \( A = -\ell'(1)  > 0 \).


\end{theorem}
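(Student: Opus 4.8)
The plan is to obtain the stated \emph{equality} (not merely a bound, as in Theorem~\ref{convex loss}) by combining a second-order Taylor expansion at the point $p_y = 1$ with the mean value theorem for integrals, which is exactly the device that lets me replace the sample-dependent Taylor remainder by a single function value and hence pin down one constant $B$.

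First I would expand $\ell$ around $1$. Since $\ell \in C^2[0,1]$ and $\ell(1)=0$, Taylor's theorem with Lagrange remainder gives, for each realization of the random variable $p_y \in [0,1]$, a point $\xi$ lying between $p_y$ and $1$ such that
\[
\ell(p_y) = \ell'(1)(p_y - 1) + \tfrac{1}{2}\ell''(\xi)(p_y-1)^2 = A(1-p_y) + \tfrac{1}{2}\ell''(\xi)(1-p_y)^2,
\]
using $A = -\ell'(1) > 0$. Taking expectations over $\mathcal{D}$ and recalling $\mathbb{E}_{\mathcal{D}}[1-p_y] = \epsilon$ together with $\mathbb{E}_{\mathcal{D}}[(1-p_y)^2] = \sigma^2 + \epsilon^2$ yields
\[
\mathbb{E}_{\mathcal{D}}[\ell(p_y)] = A\epsilon + \tfrac{1}{2}\,\mathbb{E}_{\mathcal{D}}\!\left[\ell''(\xi)(1-p_y)^2\right].
\]
At this stage the remainder still carries the nuisance point $\xi$, which depends on the realization of $p_y$; collapsing it into one constant is the crux.

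Next I would invoke the mean value theorem for integrals. Because $\ell \in C^2[0,1]$, the second derivative $\ell''$ is continuous on the compact interval $[0,1]$ and therefore attains a minimum $m$ and a maximum $M$; strict concavity forces $m \leqslant M < 0$. The weight $(1-p_y)^2$ is non-negative, so the weighted average $\mathbb{E}_{\mathcal{D}}[\ell''(\xi)(1-p_y)^2]$ lies in $[\,m(\sigma^2+\epsilon^2),\,M(\sigma^2+\epsilon^2)\,]$. Dividing by $\sigma^2+\epsilon^2$ (strictly positive whenever $\epsilon > 0$) places the normalized average in $[m,M]$, and since $\ell''$ is continuous its range covers $[m,M]$, so the intermediate value theorem supplies a point $\xi^\ast \in [0,1]$ with $\ell''(\xi^\ast)(\sigma^2+\epsilon^2) = \mathbb{E}_{\mathcal{D}}[\ell''(\xi)(1-p_y)^2]$. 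Setting $B = \tfrac{1}{2}\ell''(\xi^\ast)$ then gives the exact identity $\mathbb{E}_{\mathcal{D}}[\ell(p_y)] = A\epsilon + B(\sigma^2+\epsilon^2)$, and $B < 0$ since $\ell''(\xi^\ast) \leqslant M < 0$.

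The main obstacle is precisely this passage from the Lagrange remainder, where $\xi$ varies with the sample, to the single constant $B$ claimed in the statement; it is resolved by the mean value theorem for integrals, leaning on the continuity of $\ell''$ on the \emph{closed} interval $[0,1]$ — which is why the hypothesis here is $\ell \in C^2[0,1]$ rather than $C^2(0,1]$ as in Theorem~\ref{convex loss}. The only degenerate case to flag is $\sigma^2 + \epsilon^2 = 0$, i.e.\ $p_y \equiv 1$ almost surely; there both sides vanish and any $B \leqslant 0$ works, so the conclusion still holds.
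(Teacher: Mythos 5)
Your proof is correct, and it follows the same skeleton as the paper's own proof: a second-order Taylor expansion of $\ell$ at $1$ with Lagrange remainder, followed by a mean-value argument that collapses the sample-dependent remainder $\tfrac{1}{2}\ell''(\xi(p_y))(1-p_y)^2$ into a single constant times $\mathbb{E}_{\mathcal{D}}[(1-p_y)^2] = \sigma^2+\epsilon^2$. The difference is in how that collapse is justified. The paper assumes $p_y$ admits a probability density $f$, writes the remainder as $\int_0^1 \ell''(\xi(x))f(x)(1-x)^2\,dx$, and invokes the First Mean Value Theorem for Integration --- which, strictly speaking, requires the factor $x \mapsto \ell''(\xi(x))$ to be continuous, something that is not automatic because the Lagrange point $\xi(x)$ need not be uniquely defined, let alone continuous in $x$. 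Your route --- trapping the weighted average between $m(\sigma^2+\epsilon^2)$ and $M(\sigma^2+\epsilon^2)$ with $m=\min_{[0,1]}\ell''$, $M=\max_{[0,1]}\ell''$, then applying the intermediate value theorem to $\ell''$ itself --- needs no density, no regularity of the composed map, and you also dispose of the degenerate case $\sigma^2+\epsilon^2=0$, which the paper ignores. So while the decomposition is identical, your execution of the key step is the more robust one and is valid for arbitrary distributions of $p_y$ on $[0,1]$.

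One quibble: your claim that strict concavity forces $M<0$ is not true for $C^2$ functions. For example, $\ell(x) = \tfrac{17}{16} - x - \left(x-\tfrac{1}{2}\right)^4$ is strictly concave on $[0,1]$, satisfies $\ell(1)=0$ and $\ell'(x)<0$ throughout, yet $\ell''\left(\tfrac{1}{2}\right)=0$. Consequently you may only conclude $B = \tfrac{1}{2}\ell''(\xi^\ast) \leqslant 0$, not $B<0$. This matches the theorem's literal assertion ($B\leqslant 0$), and the paper's proof carries exactly the same limitation, so nothing essential is lost.
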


The detailed proof is presented in the Appendix \ref{proof for concave loss}. Given the above theorem, we find that the coefficient of $\sigma^2$ is \textbf{negative} if the loss function $\ell$ is strictly concave. In other words, a concave loss function can increase the loss variance during gradient descent, which is the expected property for mitigating privacy risk. In what follows, we propose a general framework that endows the robustness to cross-entropy loss against membership inference attacks.

\section{Our Proposed Method}
 Theorem \ref{concave loss} indicates that concave functions can be leveraged to design loss functions. To design our loss function, we first introduce a formal definition of the concave term.
\begin{definition}[Concave Term] \label{Concave Term} We define a concave function set as:
$$\mathcal{F} = \{ f \in C^{2}[0, 1] \mid f'(x) < 0, f''(x) < 0,  \forall x \in [0, 1] \}$$
\end{definition}

In this definition, (1)$f'(x) < 0$ ensures that the smaller the objective loss, the larger the confidence of true label $p_y$, which can preserve utility, (2) $f''(x)<0$ ensures that this a concave function that can help increase the variance of $p_y$. 
\paragraph{Convex-concave loss.}
We propose to add a concave term into the original loss function (e.g., cross-entropy loss), which is called \emph{Convex-Concave Loss} (CCL):
\begin{equation} \label{ConcaveLoss}
    \ell_{\mathrm{ccl}}  = \alpha \hat{\ell} + (1-\alpha) \tilde{\ell}
\end{equation}

where $\hat{\ell}$ is the origin convex function, $\tilde{\ell} \in \mathcal{F}$ is a concave term, and $\alpha \in [0,1]$ denotes a hyperparameter to adjust the trade-off between privacy and utility flexibly. In this paper, we just consider cross-entropy loss as the original loss function. The experiments with other convex loss functions are provided in Appendix \ref{other convex functions}.

 Theorem \ref{concave loss} indicates that concave functions can be leveraged to design loss functions. However, in the early stages of training epochs, this approach tends to result in a smaller step size in the gradient ascent process. Consequently, we employ CE loss to facilitate better convergence \cite{zhang2018generalized} and more effective learning. 

For specific concave functions, it is possible to select commonly used monotonic non-linear functions for their design. For instance, we can take the negative of the exponential function as Concave Exponential Loss (CEL):
\begin{align}
    \tilde{\ell}_{\text{exp}} &= -\exp(p_y) 
\end{align}
Alternatively, we can just employ a quadratic polynomial function as Concave Quadratic Loss (CQL):
\begin{equation}
    \tilde{\ell}_{\text{qua}} =-p_y -\frac{1}{2}p_y^2 
\end{equation} 
These two concave terms are all in $\mathcal{F}$. We denote these two concave functions integrated with $\ell_{\mathrm{ce}}$ as CCEL and CCQL.

\paragraph{Gradient analysis.} \label{Gradient analysis} 
There we derive the gradients of CCL. Consider the case of a single true label, we obtain the gradient of the concave term $\tilde{l} \in \mathcal{F}$ w.r.t the logits $z_j$ as follows: 
\begin{align}
   \frac{\partial \tilde{\ell}}{\partial z_j} &= \frac{\partial p_y}{\partial z_j} \tilde{\ell}^{\prime}(p_y) \\
 &= 
\begin{cases} \label{gradient analysis concave}
 p_y(1-p_y) \cdot \tilde{\ell}^{\prime}(p_y) \leqslant 0, & j =y\\
 -p_j p_y \cdot \tilde{\ell}^{\prime}(p_y) \geqslant 0, & \text{otherwise}
\end{cases}
\end{align}

As for CE loss $\ell_{\mathrm{ce}}$, the gradient is 
\begin{equation}
\frac{\partial \ell_{\mathrm{ce}}}{\partial z_j}= 
\begin{cases}p_y-1 \leqslant 0, & j=y 
\\ p_j \geqslant 0, & \text{otherwise}
\end{cases}
\end{equation}
It is clear that the gradient of the concave term in $\mathcal{F}$ has the same sign with CE loss for each $j$, so does the complete loss function $\ell_{\mathrm{ccl}}$:
\begin{align}
\frac{\partial \ell_{\mathrm{ccl}}}{\partial z_j} &= 
\begin{cases}(p_y-1)[\alpha- (1-\alpha) p_y\tilde{\ell}^{'}(p_y)] \leqslant 0, & j=y 
\\ p_j [\alpha- (1-\alpha) p_y\tilde{\ell}^{\prime}(p_y)] \geqslant 0, & \text{otherwise}
\end{cases} \\
&=  [\alpha- (1-\alpha) p_y\tilde{\ell}^{\prime}(p_y)]\frac{\partial \ell_{\mathrm{ce}}}{\partial z_j}
\end{align}

where $\alpha- (1-\alpha) p_y\tilde{\ell}^{\prime}(p_y) \geqslant 0$.
Considering the concavity of $\tilde{\ell}$, it follows that a greater $p_y$ will result in a larger magnitude of $| p_y\tilde{\ell}^{\prime}(p_y) |$. This can be interpreted such that $\alpha - (1-\alpha) p_y\tilde{\ell}^{\prime}(p_y)$ acts as an acceleration coefficient. That is, for these samples with higher $p_y$, this coefficient serves to increase $p_y$ more rapidly. Consequently, this leads to a broadening in the range of confidence distributions.

Furthermore, we provide bounds of the gradient of $\ell_{\mathrm{ccl}}$.
\begin{proposition} \label{bound}
For any input $\boldsymbol{x}$ and any $\alpha >0$, the gradient of $\ell_{\mathrm{ccl}}$ w.r.t logits $z_j$ is bounded above and below as follows:
    \begin{equation}
   \alpha  \frac{\partial \ell_{\mathrm{ce}} }{\partial z_j}
   \leqslant  
   \frac{\partial \ell_{\mathrm{ccl}} }{\partial z_j}
   \leqslant  
   [\alpha + A (1-\alpha)]\frac{\partial \ell_{\mathrm{ce}}}{\partial z_j}
\end{equation}
where $A = -\tilde{\ell}^{\prime} (1)>0$.
\end{proposition}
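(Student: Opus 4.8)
The plan is to reduce the two-sided gradient inequality to a one-dimensional bound on the scalar multiplier that already appears in the gradient analysis preceding the statement. That derivation gives $\frac{\partial \ell_{\mathrm{ccl}}}{\partial z_j} = C(p_y)\,\frac{\partial \ell_{\mathrm{ce}}}{\partial z_j}$, where I write $C(p_y) := \alpha - (1-\alpha)\,p_y\,\tilde{\ell}'(p_y)$ for the acceleration coefficient. Hence it suffices to sandwich the scalar $C(p_y)$ between $\alpha$ and $\alpha + A(1-\alpha)$ and then multiply through by $\frac{\partial \ell_{\mathrm{ce}}}{\partial z_j}$, keeping track of its sign in the two cases $j=y$ and $j \neq y$.

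For the lower bound, $p_y \in [0,1]$ together with $\tilde{\ell} \in \mathcal{F}$ (so $\tilde{\ell}'(p_y) < 0$) gives $-p_y\,\tilde{\ell}'(p_y) \geqslant 0$; since $1-\alpha \geqslant 0$ this yields $C(p_y) \geqslant \alpha$ immediately, with no further work.

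The upper bound is the crux. I would define $\phi(p) := -p\,\tilde{\ell}'(p)$, so that $C(p_y) = \alpha + (1-\alpha)\,\phi(p_y)$ and $A = \phi(1)$, and show that $\phi$ attains its maximum over $[0,1]$ at $p=1$. Differentiating gives $\phi'(p) = -\tilde{\ell}'(p) - p\,\tilde{\ell}''(p)$: the first term is strictly positive because $\tilde{\ell}'(p) < 0$, and the second is non-negative because $\tilde{\ell}''(p) < 0$ and $p \geqslant 0$. Thus $\phi'(p) > 0$ on $[0,1]$, so $\phi(p_y) \leqslant \phi(1) = A$, and multiplying by $1-\alpha \geqslant 0$ gives $C(p_y) \leqslant \alpha + A(1-\alpha)$. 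This is precisely where both defining properties of $\mathcal{F}$ enter simultaneously: monotonicity ($\tilde{\ell}' < 0$) and concavity ($\tilde{\ell}'' < 0$) are each responsible for one term of $\phi'$, and neither alone suffices to force $\phi$ to peak at the endpoint.

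Finally I would combine $\alpha \leqslant C(p_y) \leqslant \alpha + A(1-\alpha)$ with the sign of the cross-entropy gradient. For $j \neq y$ we have $\frac{\partial \ell_{\mathrm{ce}}}{\partial z_j} = p_j \geqslant 0$, so multiplication preserves the ordering and reproduces the stated inequality verbatim. The subtle point — and the part I expect to require the most care in stating the claim cleanly — is the case $j=y$, where $\frac{\partial \ell_{\mathrm{ce}}}{\partial z_j} = p_y - 1 \leqslant 0$: multiplying by a non-positive factor reverses the ordering, so the two outer bounds exchange roles. I would therefore present the conclusion case-wise, or equivalently as a bound on $\bigl|\partial \ell_{\mathrm{ccl}}/\partial z_j\bigr|$ relative to $\bigl|\partial \ell_{\mathrm{ce}}/\partial z_j\bigr|$, emphasizing that the scalar estimate $\alpha \leqslant C(p_y) \leqslant \alpha + A(1-\alpha)$ is the sign-independent invariant that underlies both cases.
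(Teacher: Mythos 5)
Your proposal is correct and follows essentially the same route as the paper's own (implicit) argument: the paper proves the proposition via the preceding gradient analysis, factoring $\frac{\partial \ell_{\mathrm{ccl}}}{\partial z_j} = [\alpha - (1-\alpha)p_y\tilde{\ell}'(p_y)]\frac{\partial \ell_{\mathrm{ce}}}{\partial z_j}$ and bounding the scalar coefficient in $[\alpha,\, \alpha + A(1-\alpha)]$, with your monotonicity argument for $\phi(p) = -p\,\tilde{\ell}'(p)$ making rigorous the paper's remark that larger $p_y$ yields larger $|p_y\tilde{\ell}'(p_y)|$. Your caveat about the case $j=y$ is also well taken: since $\frac{\partial \ell_{\mathrm{ce}}}{\partial z_y} = p_y - 1 \leqslant 0$, the two-sided inequality as printed holds verbatim only for $j \neq y$ and the bounds swap roles when $j = y$, a sign issue the paper glosses over and which your case-wise (or absolute-value) formulation correctly repairs.
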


From Proposition 4.2, the gradient of $\ell_{\mathrm{ccl}}$ is bounded by the scaled gradient of $\ell_{\mathrm{ce}}$ in each dimension. Therefore, the gradient of the proposed loss $\frac{\partial \ell_{\mathrm{ccl}}}{\partial z_j}$ will approach zero when CE loss achieves its optimum ($\frac{\partial \ell_{\mathrm{ce}} }{\partial z_j}=0$). This indicates that our approach is capable of converging in the same region where CE achieves its optimum.

\section{Experiments}
\begin{figure*}[!ht]
    \centering
    \includegraphics[width=0.9\linewidth]{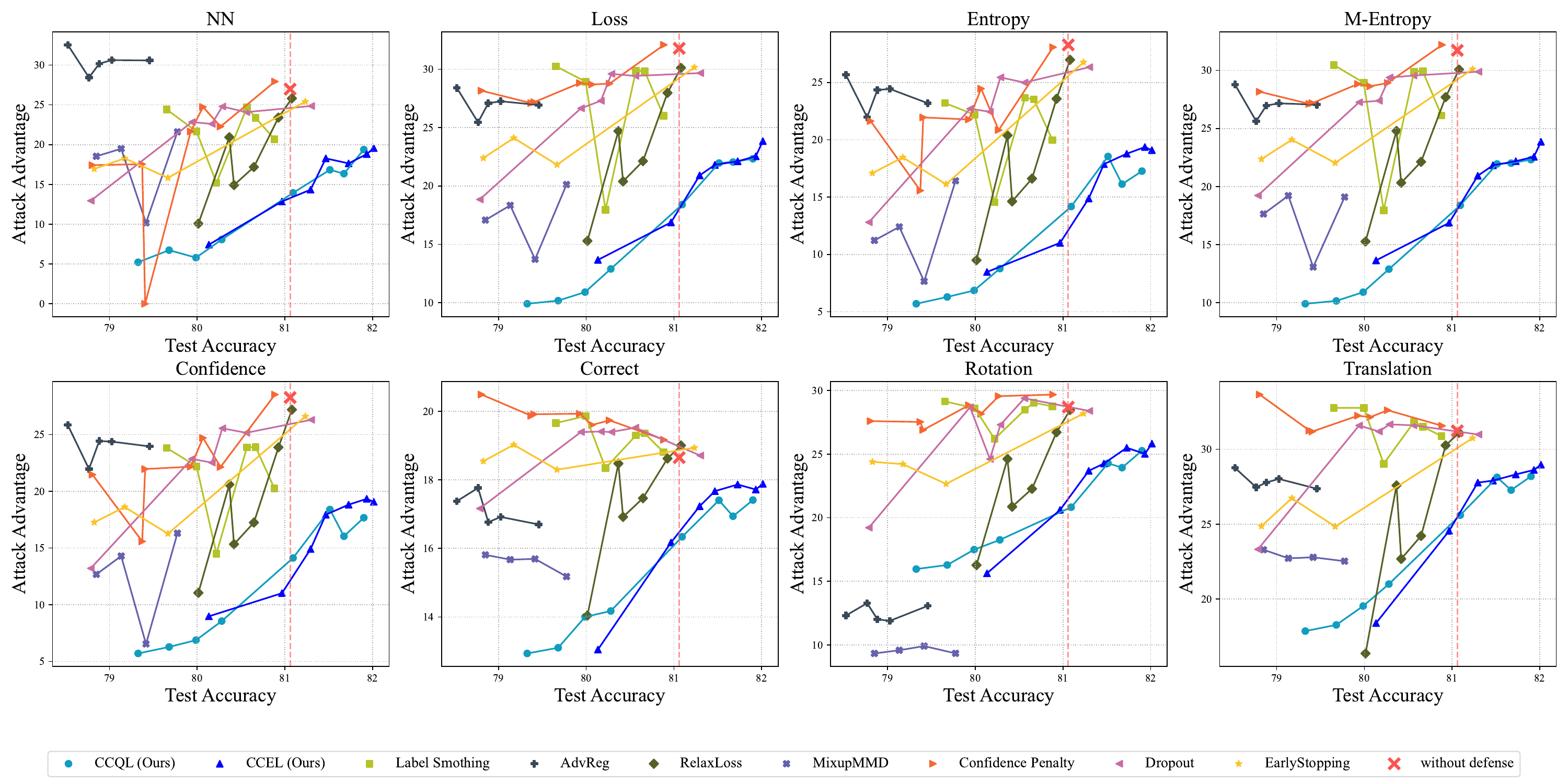}
    \caption{Comparisons of seven defense mechanisms on CIFAR-10 dataset utilizing  Resnet34 architecture. Each subplot is allocated to a distinct attack method, wherein individual curves represent the performance of a defense mechanism under different hyperparameter settings. The horizontal axis represents the target models' test accuracy (the higher the better), and the vertical axis represents the corresponding attack advantage (defined in Definition \ref{def: adv}, the lower the better). To underscore the disparity between the defense methods and the vanilla (undefended model), we plot the dotted line originating from the vanilla results.}
    \label{fig:cifar10_baselines}
\end{figure*}
In this section, we validate the effectiveness of our CCL across a wide range of datasets with diverse models, various attack models, and multiple defense baselines.
\subsection{Setups}
\paragraph{Datasets.}
In our evaluation, we employ five datasets: Texas100 \cite{TexasInpatient2006}, Purchase100 \cite{KaggleValuedShoppers2014}, CIFAR-10, CIFAR-100 \cite{krizhevsky2009learning}, and ImageNet \cite{russakovsky2015imagenet}. For standard training methods, we split each dataset into four subsets, with each subset serving alternately as the training or testing set for the target and shadow models. As for adversarial training algorithms that incorporate adversary loss—such as Mixup+MMD \cite{li2021mixup} and adversarial regularization \cite{nasr2018machine}—we divide the datasets into five subsets. The additional subset is specifically utilized to generate adversary loss.
\paragraph{Training details.}
\begin{figure*}
    \centering
    \includegraphics[width=0.9\linewidth]{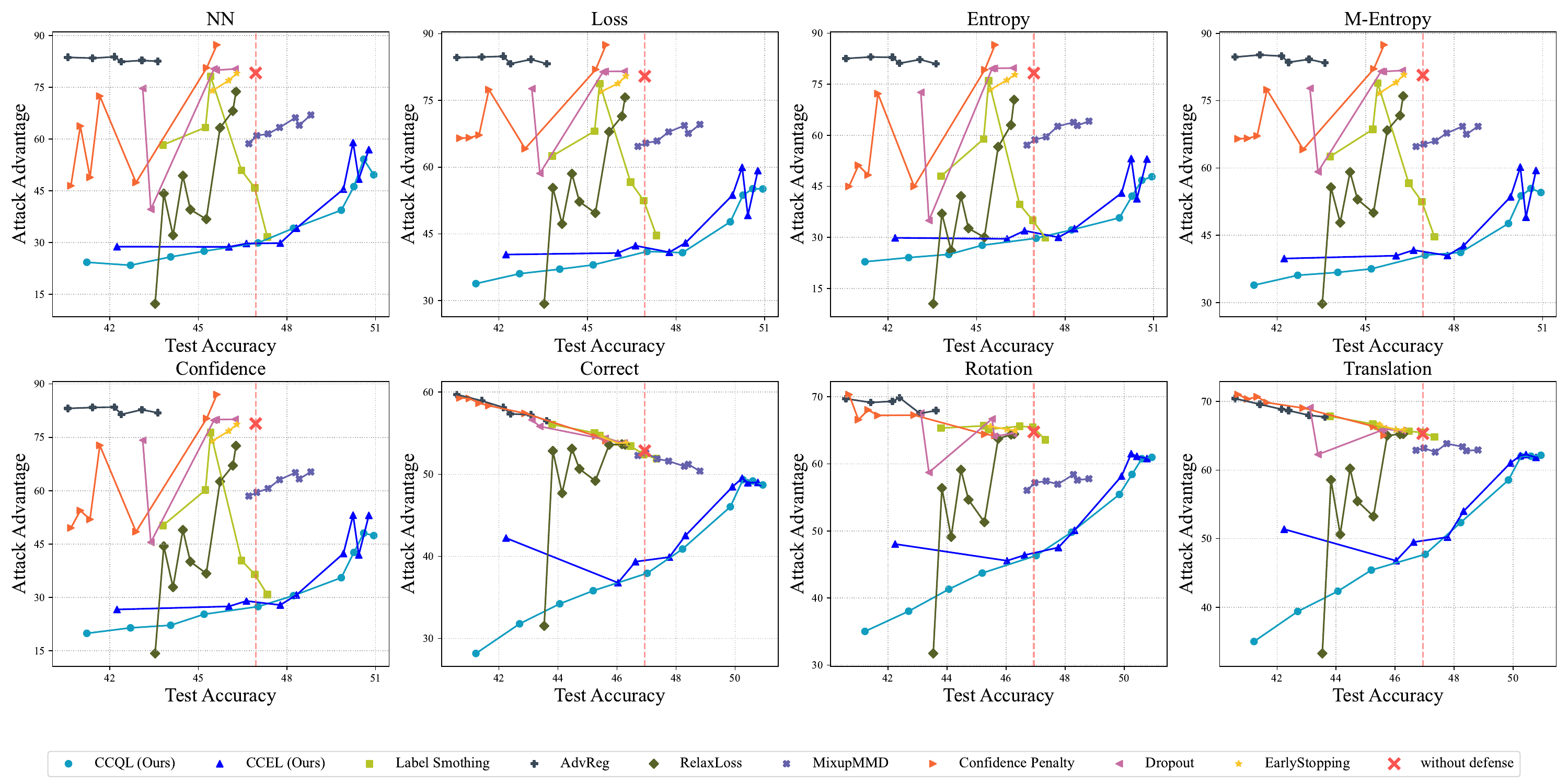}
    \caption{Comparisons of seven defense mechanisms on CIFAR-100 dataset utilizing  Densenet121 architecture. Each subplot is allocated to a distinct attack method, wherein individual curves represent the performance of a defense mechanism under different hyperparameter settings. The horizontal axis represents the target models' test accuracy (the higher the better), and the vertical axis represents the corresponding attack advantage (defined in Definition \ref{def: adv}, the lower the better). To underscore the disparity between the defense methods and the vanilla (undefended model), we plot the dotted line originating from the vanilla results.} 
    \label{fig:cifar100_baselines}
\end{figure*}
We train the models using SGD with a momentum of 0.9, a weight decay of 0.0005, and a batch size of 128. We set the initial learning rate as 0.1 and drop it by a factor of 10 at each decay epoch. For CIFAR-10 and CIFAR-100, we conduct training using a 34-layer ResNet \cite{he2016deep} and a 121-layer DenseNet \cite{huang2017densely}, with decay milestones set at {150, 225} over a total of 300 epochs. In the case of Imagenet, we employ a 121-layer DenseNet with decay milestones at \{30, 60\}, spanning a total of 90 epochs. For Texas100 and Purchase100, training is performed using MLPs as described in previous studies \cite{nasr2018machine, jia2019memguard}, with decay milestones at \{50, 100\} across 120 total epochs. 


\subsection{Hyperparameter Tuning}
In our approach to hyperparameter tuning, we align with the protocols established by previous work \cite{chen2022relaxloss}. In particular, we employ hyperparameter tuning focused on a single hyperparameter, \(\alpha\) defined in \ref{ConcaveLoss}. Through a detailed grid search on a validation set, we adjust \(\alpha\) to achieve an optimal balance. This process involves evaluating the privacy-utility implications at various levels of \(\alpha\) and then selecting the value that aligns with our specific privacy/utility objectives, thereby enabling precise management of the model's privacy and utility. Following focal loss \cite{lin2017focal}, we set a scalar factor on our loss functions. Specifically, for CIFAR-10, the scale factor is 0.01; for CIFAR-100, it is 0.05. For other datasets, we set it as 1. As for our hyperparameter $\alpha$, we vary it across $\{0.1, \dots ,0.9\}$.

\paragraph{Attack methods.} \label{attack method}
In our study, we experiment with three classes of MIA: (1) Neural Network-based Attack (NN) \cite{shokri2017membership, hu2022membership}, which leverages the full logits prediction as input for attacking the neural network model. (2) Metric-based Attack, employing specific metrics computation followed by a comparison with a preset threshold to ascertain the data record's membership status. The metrics we chose for our experiments include Correctness, Loss \cite{yeom2018privacy}, Confidence, Entropy \cite{salem2018ml}, and Modified-Entropy (M-entropy) \cite{song2021systematic}. (3) Augmentation-based Attack \cite{choquette2021label}, utilizing prediction data derived through data augmentation techniques as inputs for a binary classifier model. In this category, we specifically implemented rotation and translation augmentations. 

For the details of the attack, we assume the most powerful black-box adaptive attack scenario: the adversary has complete knowledge of our defense mechanism and selected hyperparameters. To implement this, we train shadow models with the same settings used for our target models.

\paragraph{Defense baselines.}
We compare CCL with seven defense methods: RelaxLoss \cite{chen2022relaxloss}, Mixup+MMD \cite{li2021mixup}, Adversary regularization (Adv-Reg) \cite{li2021membership}, Dropout~\cite{srivastava2014dropout}, Label Smoothing \cite{guo2017calibration}, Confidence Penalty \cite{pereyra2017regularizing}, and Early Stopping \cite{yao2007early}. 
\begin{table*}[!ht]
    \centering
    \begin{tabular}{@{}cSSSSS@{}}
    \toprule
        \multicolumn{1}{c}{Dataset} & {Texas} & {Purchase} & {ImageNet} & {CIFAR-10} & {CIFAR-100} \\ 
    \midrule
        CCQL & 0.607 & \textbf{0.868} & \textbf{0.610} & 0.769 & \textbf{0.487} \\
        CCEL & \textbf{0.608} & 0.864 & 0.609 & \textbf{0.797}& 0.480 \\
        w/o & 0.551 & 0.858 & 0.598 & 0.741 & 0.273 \\
    \bottomrule
    \end{tabular}
    \caption{P1 score (defined in Equation \ref{p1_score}) evaluated on target models trained on different datasets. The bold indicates the best results. Here, "w/o" denotes undefended models.} 
    \label{p1_datasets}
\end{table*}
\paragraph{Evaluation Metrics.}
To comprehensively assess our method's impact on privacy and utility, we employ three evaluation metrics that encapsulate utility, privacy, and the balance between the two. Utility is gauged by the test accuracy of the target model. Privacy is measured through the attack advantage, as defined in Equation \ref{def: adv}. To assess the trade-off between utility and privacy, we utilize the P1 score \cite{paul2021defending}, which is defined as:
\begin{equation} \label{p1_score}
    P1 = 2 * \frac{\mathit{Acc} * (1-\mathit{Adv})}{(\mathit{Acc}) +(1 -\mathit{Adv})}
\end{equation}
where $\mathit{Acc}$ denotes the test accuracy and $\mathit{Adv}$ denotes the attack advantage of the attacker on the target model. 
\subsection{Results}
\paragraph{Can CCL improve privacy-utility trade-off ?}
In Picture \ref{fig:cifar10_baselines} and Picture \ref{fig:cifar100_baselines}, we plot privacy-utility curves to show the privacy-utility trade-off. The horizontal axis represents the performance of the target model, and the vertical axis represents the attack advantage defined in  \ref{def: adv}.
A salient observation is that both of our methods drastically improve the privacy-utility trade-off. In particular, for these points that perform better than vanilla for utility (the area to the right of the dotted line), the privacy-utility curves of our methods are always below those of others. This means we can always obtain the highest privacy for any utility requirement higher than the undefended model. 
 For example, on the CIFAR10 dataset, we focus on the hyperparameter $\alpha$ corresponding to the model with the lowest attack advantage with the constrain condition that test accuracy is better than vanilla, then our method with quadratic function can decrease the attack advantage of loss-metric-based from 29.67\% to 18.40\% compared with Dropout (the most powerful defend method under our condition above).
\paragraph{Is CCL effective with different datasets?}
To ascertain the efficacy of our proposed method across heterogeneous data, we have executed a series of experiments on a diverse array of datasets, encompassing tabular and image datasets. For the experimental results shown in Table \ref{p1_datasets}, we have set the adjustment coefficient of CCL to a constant value, specifically $\alpha = 0.5$. To assess the privacy-utility balanced performance, we use the highest attack advantage of all attack methods to calculate the P1 score 
From the results, we observe that both of our methods yield a consistent improvement in the P1 score. 
\begin{figure*}[!ht] 
    \centering
    \begin{subfigure}[b]{0.30\textwidth} 
        \centering
        \includegraphics[width=\textwidth]{\detokenize{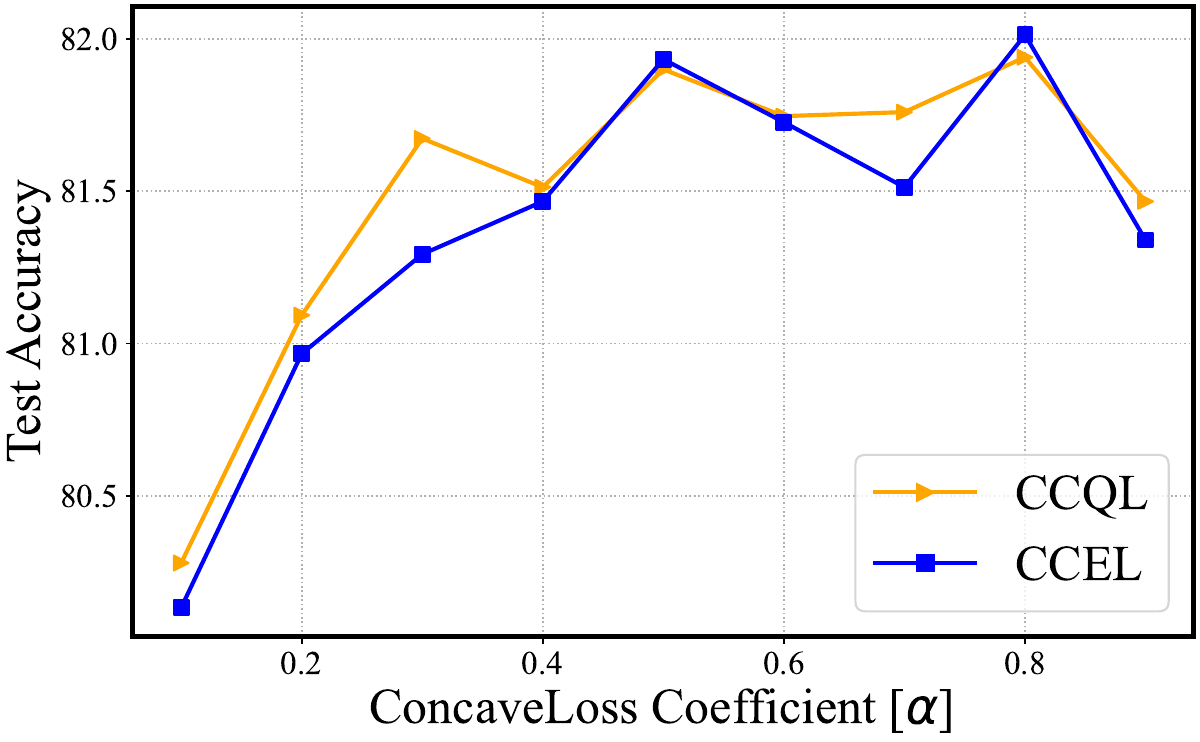}}
        \caption{Utility}
        \label{fig:abltion_utility}
    \end{subfigure}%
    \begin{subfigure}[b]{0.30\textwidth} 
        \centering
        \includegraphics[width=\textwidth]{\detokenize{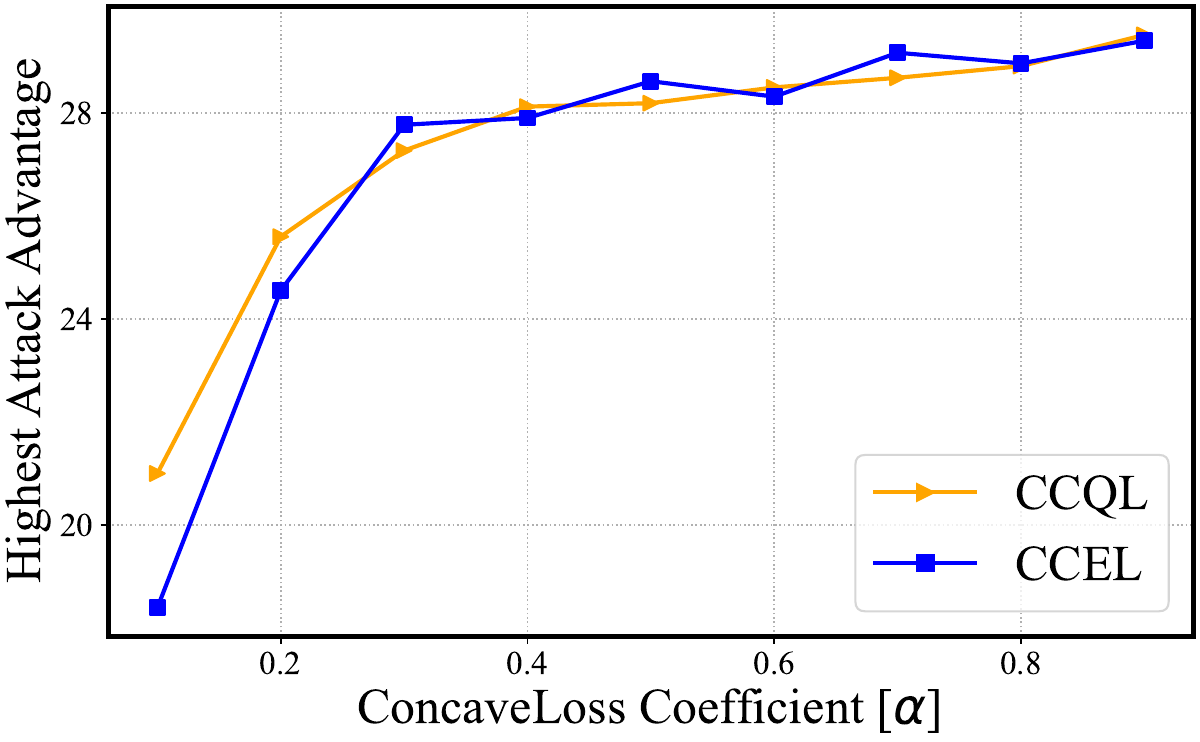}}
        \caption{Privacy}
        \label{fig:abltion_privacy}
    \end{subfigure}
    \begin{subfigure}[b]{0.30\textwidth} 
        \centering
        \includegraphics[width=\textwidth]{\detokenize{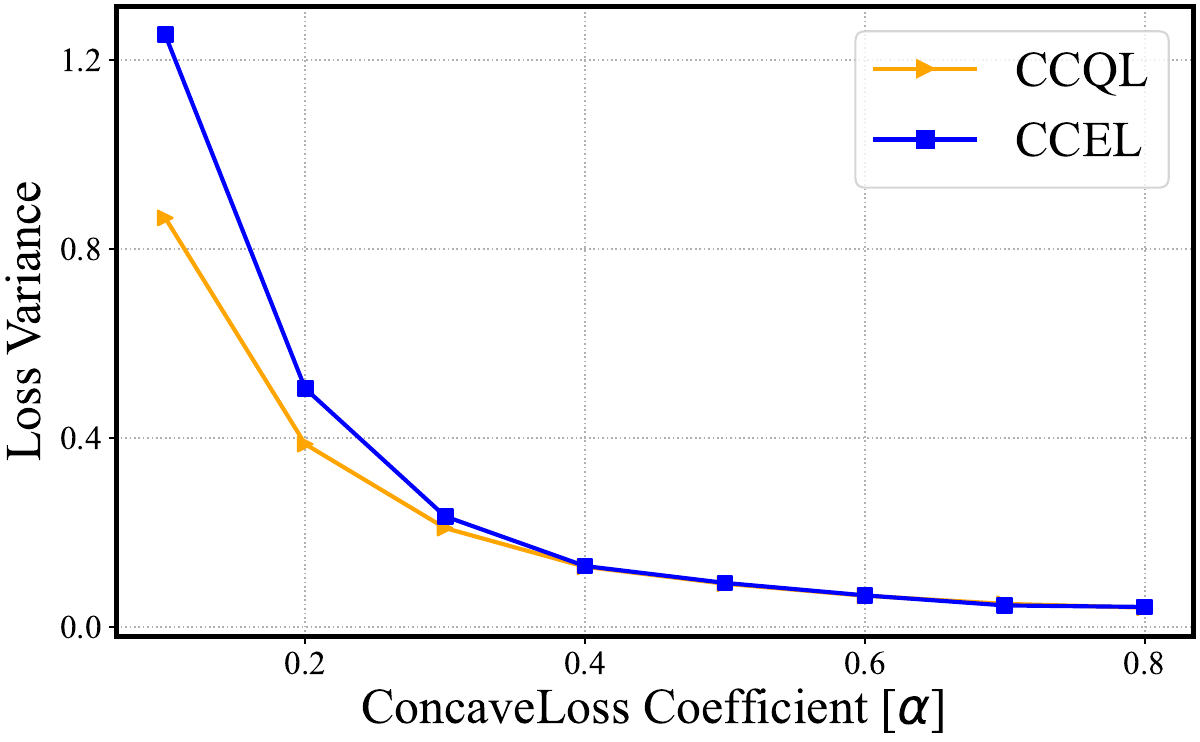}}
        \caption{Variance}
        \label{fig:abltion_variance}
        \end{subfigure}%
    \caption{The effect of \(\alpha\) on utility (test accuracy), privacy (highest attack advantage), and loss variance.}
\end{figure*}
\paragraph{How does $\alpha$ affect utility and privacy?}
In Figure \ref{fig:abltion_utility}, Figure \ref{fig:abltion_privacy} and Figure \ref{fig:abltion_variance}, we conduct an ablation study to examine the impact of the coefficient $\alpha$ in our method on both utility, privacy, and loss variance. The analysis is based on CIFAR-10. As is shown in Figure \ref{fig:abltion_variance}, our findings are in alignment with the insights provided in Theorem \ref{concave loss} and Theorem \ref{convex loss}. As the $\alpha$ decreases, the effect of the concave term becomes more significant, leading to a gradual increase in variance. On the other hand,
a larger $\alpha$ value brings our loss function closer to the cross-entropy loss, thereby increasing the privacy risk. Conversely, a smaller $\alpha$ value leads to a smaller gradient effect, culminating in underfitting, which consequently diminishes accuracy. 
\begin{figure*}[!ht] 
    \centering
    \begin{subfigure}[b]{0.4\textwidth} 
        \centering
        \includegraphics[width=\textwidth]{\detokenize{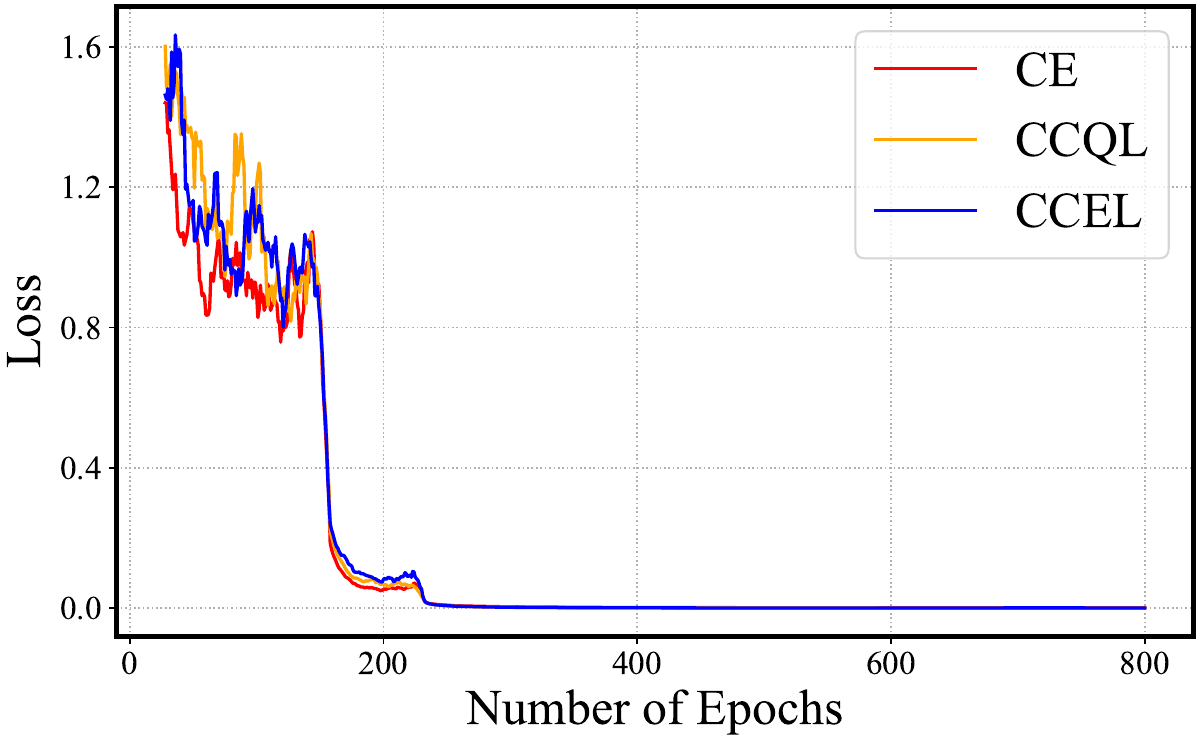}}
        \caption{Loss}
        \label{cifar10_loss_convergence}
    \end{subfigure}
    \hspace{2cm}%
    \begin{subfigure}[b]{0.4\textwidth} 
        \centering
        \includegraphics[width=\textwidth]{\detokenize{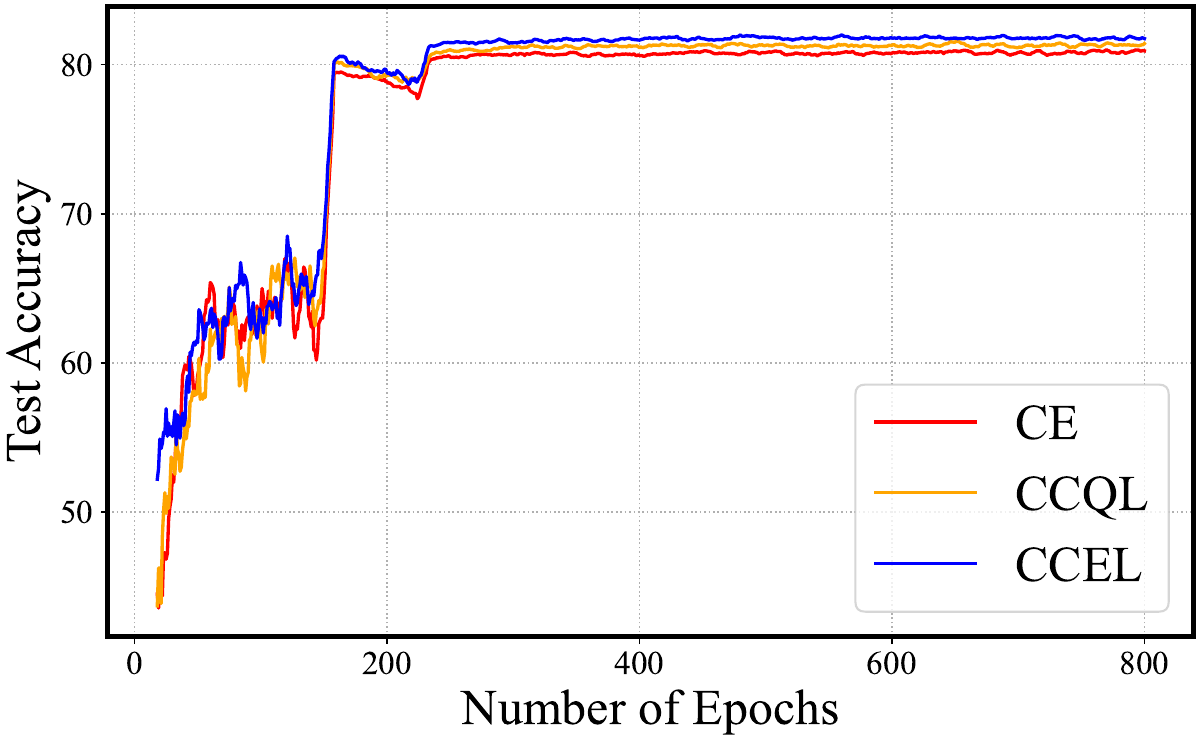}}
        \caption{Test Accuracy}
        \label{cifar10_accuracy_convergence}
    \end{subfigure}
    \caption{Convergence analysis of CCL on CIFAR-10 dataset}
    \label{fig:convergece}
\end{figure*}

\paragraph{Convergence analysis.}
As demonstrated in Equation \ref{gradient analysis concave}, our CCL induces a larger gradient compared to CE loss when the true label confidence \( p_y \) approaches 1. Conversely, during the initial epochs when \( p_y \) is smaller, our approach tends to result in smaller gradient steps.
Although a detailed gradient analysis is presented in Section \ref{Gradient analysis}, it may also ask: Can a model utilizing CCL achieve a stable state? Furthermore, does the implementation of CCL lead to a slower convergence rate? There, we conduct an experiment on CIFAR-10 datasets with fixed $\alpha =0.5$ and plot the training loss curve and the test accuracy curve. As Figure \ref{fig:convergece} shows, both of our proposed CCL functions can converge properly, and there is no significantly longer training time than that of CE to reach convergence. 
\section{Discussion} 
\label{sec:discussion}
\paragraph{How does loss variance affect the attack advantage?} 
Normally, the target model is trained on its training members with the objective of minimizing the error in its predictions. Consequently, the prediction error for a sample within the training dataset would be less than that for a sample within the testing dataset. In this way, the attack model $\mathcal{A}_{\mathrm{loss}}$ \cite{yeom2018privacy} is defined as:
\begin{equation}
    \label{eq:loss_attack_model}
    \mathcal{A}_{\mathrm{loss}} = \mathbb{I}(\mathcal{L}_{\mathrm{CE}}(h_S(\boldsymbol{x}),y) \leqslant  \tau)
\end{equation}
That is, given a sample, we calculate its loss by the target model and then infer it as a member if its loss is smaller than a preset threshold $\tau$\footnote{$\tau$ is determined by shadow models}.

Under the common Gaussian assumption of loss distribution \cite{yeom2018privacy, chen2022relaxloss}, we first introduce a theorem to show how loss variance affects the attack advantage of attack model $\mathcal{A}_{\mathrm{loss}}$ (defined in Equation \ref{eq:loss_attack_model}).
\begin{theorem}  \label{metric adv}
Suppose $\epsilon$ is a random variable denoting loss, such that $\epsilon \sim N(\mu_S,\sigma_{S}^2)$ when $m = 1$ and $\epsilon \sim N(\mu_D,\sigma_{D}^2)$ when $m=0$. Then the membership advantage of $\mathcal{A}_{\mathrm{loss}}$ is:
    \begin{align}
        \mathit{Adv} =& \operatorname{Pr}(\mathcal{A} =1 | m=1) -  \operatorname{Pr}(\mathcal{A} =1 | m=0) \\
         =& \operatorname{Pr}(\epsilon \leqslant \tau | m=1) -  \operatorname{Pr}( \epsilon \leqslant \tau | m=0) \\
         =& \Phi(\frac{\tau - \mu_S}{\sigma_S}) - \Phi(\frac{\tau-\mu_D}{\sigma_D})
    \end{align}
where $\Phi(\cdot)$ is the cumulative distribution function of standard normal distribution.
\label{adv quation}
\end{theorem}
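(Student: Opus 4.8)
The plan is to unfold the definition of the attack model $\mathcal{A}_{\mathrm{loss}}$ and then reduce each conditional probability to a standard-normal CDF by standardization. Since $\mathcal{A}_{\mathrm{loss}} = \mathbb{I}(\mathcal{L}_{\mathrm{CE}}(h_S(\boldsymbol{x}),y) \leqslant \tau)$ and $\epsilon$ denotes the loss $\mathcal{L}_{\mathrm{CE}}(h_S(\boldsymbol{x}),y)$, the event $\{\mathcal{A}=1\}$ is exactly the event $\{\epsilon \leqslant \tau\}$. Conditioning on the membership attribute then gives the first equality in the statement, namely $\Pr(\mathcal{A}=1 \mid m) = \Pr(\epsilon \leqslant \tau \mid m)$ for each value $m \in \{0,1\}$.

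Next I would handle the two conditional terms separately. For the member case $m=1$, I use the distributional premise $\epsilon \sim N(\mu_S,\sigma_S^2)$. Subtracting $\mu_S$ and dividing by $\sigma_S > 0$ turns $\epsilon$ into a standard normal variable $(\epsilon-\mu_S)/\sigma_S \sim N(0,1)$, so that $\Pr(\epsilon \leqslant \tau \mid m=1) = \Pr\!\big((\epsilon-\mu_S)/\sigma_S \leqslant (\tau-\mu_S)/\sigma_S\big) = \Phi\big((\tau-\mu_S)/\sigma_S\big)$ by the very definition of $\Phi$. The identical computation with $\mu_D$ and $\sigma_D$ yields $\Pr(\epsilon \leqslant \tau \mid m=0) = \Phi\big((\tau-\mu_D)/\sigma_D\big)$.

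Finally, substituting these two expressions into $\mathit{Adv} = \Pr(\mathcal{A}=1\mid m=1) - \Pr(\mathcal{A}=1\mid m=0)$ produces the claimed closed form. There is no genuine obstacle here: the result follows directly from the thresholding form of $\mathcal{A}_{\mathrm{loss}}$ together with the affine invariance (standardization) of the Gaussian family. The only points worth stating carefully are that $\sigma_S, \sigma_D > 0$, so the division is well defined and the monotone affine transformation preserves the direction of the inequality, and that the Gaussian form of the loss is the modeling assumption rather than something to be derived within the proof.
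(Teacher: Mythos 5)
Your proposal is correct and follows exactly the argument the paper itself uses: the chain of equalities in the theorem statement is precisely your three steps --- unfold the definition of membership advantage, identify the event $\{\mathcal{A}=1\}$ with $\{\epsilon \leqslant \tau\}$ via the thresholding form of $\mathcal{A}_{\mathrm{loss}}$, and standardize each Gaussian to obtain the $\Phi$ expressions. Your explicit remarks that $\sigma_S, \sigma_D > 0$ makes the standardization well defined and that the Gaussian form is a modeling assumption are fine additions but do not change the route.
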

Note that $\operatorname{Pr}(\mathcal{A} =1 | m=0)$ is false positive rates of the adversary, which is expected to be controlled at a small value \cite{leemann2023gaussian,tan2022parameters}. Assume $\tau$ is chosen such that $\Phi(\frac{\tau-\mu_D}{\sigma_D}) = \alpha$, 
then we have:
\begin{equation} 
    \label{adv with sigma}
    \mathit{Adv} = \Phi\{\frac{\Phi^{-1}(\alpha)\sigma_D+\mu_D-\mu_S}{\sigma_S} \} - \alpha
\end{equation}
This implies that increasing the variance of training loss distribution $\sigma_S$ can help to decrease the attack advantage.

\paragraph{Can our method converge?}

In general, non-convex losses do not prevent the convergence of optimization, which has been well-studied in the literature \cite{khaled2020better, zou2019sufficient, garrigos2023handbook, DefossezBBU22}. For example, the work \cite{khaled2020better} proves the convergence of SGD in the non-convex setting under expected smoothness assumption and yields the optimal convergence rate $\mathcal{O}\left(\varepsilon^{-4}\right)$ for finding a stationary point of non-convex smooth functions. Another work \cite{zou2019sufficient} introduces a sufficient condition to guarantee the global convergence of generic Adam/RMSProp optimizers in the non-convex setting.

Here we conclude two common sufficient conditions related to loss function as follows:

\begin{enumerate}[label=(\textbf{C\arabic*})]
    \item The minimum value of $\mathcal{L}$ is lower-bounded. \label{convergence_1}
    \item $\mathcal{L}$ is smooth, i.e., $\Vert\nabla \mathcal{L}(\boldsymbol{x}) - \nabla \mathcal{L}(\boldsymbol{y}) \Vert < L_1 \Vert\boldsymbol{x}-\boldsymbol{y} \Vert$, for all $\boldsymbol{x}$, $\boldsymbol{y} \in \mathbb{R}^{d}$ and for any $L_1 >0$. \label{convergence_2}
\end{enumerate}

We proceed by proving that CCL satisfies the two conditions, supporting the convergence of CCL with popular optimizers, such as SGD, Adam, and RMSProp.

\textbf{CCL satisfies the condition \ref{convergence_1}.} As $\mathcal{L}$ is just linear combination of $\ell_{\mathrm{ccl}}$, so we just prove $\ell_{\mathrm{ccl}}$ is lower bounded. By definition, $\ell_{\mathrm{ccl}}$ is a monotonic decreasing function and closed on the right side of the domain $(0,1]$, so it is lower bounded.

\textbf{CCL satisfies the condition \ref{convergence_2}.} Before we prove C2, we first introduce two lemmas.

\begin{lemma} \label{R1}
Let $A(x): D \subseteq \mathbb{R}^n \rightarrow \mathbb{R}^{m \times p}$ and $B(x): D \subseteq \mathbb{R}^n \rightarrow \mathbb{R}^{p \times q}$ be two Lipschitz continuous matrix functions defined on the same domain $D$, with Lipschitz constants $L_A$ and $L_B$, respectively. If $\Vert A(x)\Vert$ and $\Vert B(x)\Vert$ are bounded for all $x \in D$ by some constants $M_A$ and $M_B$, then the product function $C(x) = A(x)B(x)$ is also Lipschitz continuous on $D$.
\end{lemma}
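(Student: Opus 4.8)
The plan is to use the standard ``add and subtract a cross term'' decomposition together with submultiplicativity of the matrix norm. Fix any two points $x, y \in D$. The key algebraic step is to rewrite the difference $C(x) - C(y) = A(x)B(x) - A(y)B(y)$ by inserting and removing the mixed term $A(x)B(y)$, which yields
$$C(x) - C(y) = A(x)\bigl(B(x) - B(y)\bigr) + \bigl(A(x) - A(y)\bigr)B(y).$$
This isolates a difference in $B$ and a difference in $A$, each paired with a single bounded matrix, so that the two hypotheses can be applied independently.

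Next I would apply the triangle inequality and the submultiplicativity of the norm ($\Vert MN \Vert \leqslant \Vert M \Vert\, \Vert N \Vert$, which holds for the operator norm and more generally for any submultiplicative matrix norm) to obtain
$$\Vert C(x) - C(y) \Vert \leqslant \Vert A(x) \Vert\, \Vert B(x) - B(y) \Vert + \Vert A(x) - A(y) \Vert\, \Vert B(y) \Vert.$$
Then I would invoke the two assumptions in turn: the uniform bounds $\Vert A(x) \Vert \leqslant M_A$ and $\Vert B(y) \Vert \leqslant M_B$, and the Lipschitz estimates $\Vert B(x) - B(y) \Vert \leqslant L_B \Vert x - y \Vert$ and $\Vert A(x) - A(y) \Vert \leqslant L_A \Vert x - y \Vert$. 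Combining these gives $\Vert C(x) - C(y) \Vert \leqslant (M_A L_B + L_A M_B)\Vert x - y \Vert$, so $C$ is Lipschitz continuous on $D$ with constant $L_C = M_A L_B + L_A M_B$.

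There is no substantial obstacle here; the argument is elementary and the whole proof is essentially the three displayed lines above. The only point that deserves emphasis is \emph{why} boundedness of both factors is genuinely required: the product of two Lipschitz functions is not Lipschitz in general (for instance $x \mapsto x$ is Lipschitz on $\mathbb{R}$ but $x \mapsto x^2$ is not), and the uniform bounds $M_A, M_B$ are precisely what control the two cross terms. When the lemma is later invoked, I would note that both factors arise from softmax-type quantities confined to a compact set, so these bounds are automatically available.
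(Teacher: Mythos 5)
Your proof is correct and follows essentially the same approach as the paper's: the same add-and-subtract decomposition $A(x)B(x) - A(x)B(y) + A(x)B(y) - A(y)B(y)$, the same use of the triangle inequality and submultiplicativity, and the same final constant $L_C = M_A L_B + L_A M_B$. Your added remark on why the boundedness hypotheses are genuinely needed (the $x \mapsto x^2$ example) is a nice touch, but the core argument is identical to the paper's.
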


\begin{lemma} \label{R2}
If two functions $f: \mathbb{R}^n \rightarrow \mathbb{R}^m$ and $g: \mathbb{R}^m \rightarrow \mathbb{R}^p$ are Lipschitz continuous, with Lipschitz constants $L_f$ and $L_g$ respectively, then their composite function $h = g \circ f$ defined as $h(x) = g(f(x))$ is also Lipschitz continuous.
\end{lemma}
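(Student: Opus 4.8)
The plan is to prove Lemma~\ref{R2} by a direct chaining of the two given Lipschitz estimates, which will simultaneously produce the explicit Lipschitz constant $L_g L_f$ for the composite $h = g \circ f$. First I would fix arbitrary points $x, y \in \mathbb{R}^n$ and write out the quantity to be bounded, $\Vert h(x) - h(y) \Vert = \Vert g(f(x)) - g(f(y)) \Vert$, where the outer norm is taken in $\mathbb{R}^p$.

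Next I would apply the Lipschitz continuity of $g$, but invoked at the \emph{images} $f(x), f(y) \in \mathbb{R}^m$ rather than at $x, y$ themselves, which yields $\Vert g(f(x)) - g(f(y)) \Vert \leqslant L_g \Vert f(x) - f(y) \Vert$. Then I would apply the Lipschitz continuity of $f$ to the remaining factor, giving $\Vert f(x) - f(y) \Vert \leqslant L_f \Vert x - y \Vert$. Combining the two inequalities produces $\Vert h(x) - h(y) \Vert \leqslant L_g L_f \Vert x - y \Vert$, and since $x, y$ were arbitrary this establishes that $h$ is Lipschitz continuous with constant $L_g L_f$.

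The only point requiring care, and the closest thing to an obstacle in what is otherwise a one-line argument, is the bookkeeping of domains and codomains: the estimate for $g$ must be evaluated at $f(x)$ and $f(y)$, and the three norms in the chain live in $\mathbb{R}^p$, $\mathbb{R}^m$, and $\mathbb{R}^n$ respectively. No continuity or differentiability beyond the stated Lipschitz bounds is used, and there is no case analysis. This lemma, together with Lemma~\ref{R1}, will then be combined to verify condition~\ref{convergence_2}, the smoothness of $\ell_{\mathrm{ccl}}$, by treating $\nabla \ell_{\mathrm{ccl}}$ as a composition and product of Lipschitz maps.
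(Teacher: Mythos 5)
Your proof is correct and follows exactly the same route as the paper's: chain the Lipschitz estimate for $g$ (applied at the images $f(x)$, $f(y)$) with the estimate for $f$, yielding the constant $L_h = L_g L_f$. Nothing is missing, and your attention to which space each norm lives in matches the paper's (implicit) bookkeeping.
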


The proofs of these two Lemmas \ref{R1} and \ref{R2} are provided in the Appendix \ref{lipschitz}.
By chain rule, we have $\nabla \mathcal{L} = (\nabla \boldsymbol{z})^{\top} \frac{\partial \mathcal{L}}{\partial \boldsymbol{z}}$. Our method differs from CE loss in $\frac{\partial \mathcal{L}}{\partial \boldsymbol{z}}$, we only need to prove that our loss function $\ell_{\mathrm{ccl}}$ is smooth with respect to logits $\boldsymbol{z}$. Note that the confidence in true label $p_y$ can be written as $\boldsymbol{y}^{\top}\boldsymbol{p}$, so we have $\frac{\partial \ell_{\mathrm{ccl}}}{\partial \boldsymbol{z}} = [\alpha- (1-\alpha)\tilde{\ell}^{\prime}(\boldsymbol{y}^{\top}\boldsymbol{p}) \boldsymbol{y}^{\top}\boldsymbol{p} ] (\boldsymbol{p} - \boldsymbol{y})$, where $\boldsymbol{p}$ is probability vector and $\boldsymbol{y}$ is one-hot label vector. Given Lemma \ref{R1} and \ref{R2}, we can simplify the proof to establishing that both $\tilde{\ell}^{\prime}$ and $\boldsymbol{y}^{\top}\boldsymbol{p}$ are Lipschitz continuous.

Since $\tilde{\ell} \in C^{2}[0, 1]$, there exits an upper bound $B$ such that $\tilde{\ell}^{\prime \prime} \leqslant B$, which follows that $\Vert \tilde{\ell}^{\prime}(x) - \tilde{\ell}^{\prime} (y) \Vert \leqslant B \Vert  x-y\Vert$. With the fact that Softmax function is Lipschitz continuous [5], we have $\Vert \boldsymbol{y}^{\top}\boldsymbol{p}_1 - \boldsymbol{y}^{\top}\boldsymbol{p}_2 \Vert \leqslant \Vert \boldsymbol{y}  \Vert \cdot \Vert \boldsymbol{p}_1 - \boldsymbol{p}_2 \Vert =  \Vert\boldsymbol{p}_1 - \boldsymbol{p}_2 \Vert \leqslant  L_2 \Vert \boldsymbol{z_1} - \boldsymbol{z_1} \Vert$.

By Lemma \ref{R1} and Lemma \ref{R2}, we conclude that $\frac{\partial \ell_{\mathrm{ccl}}}{\partial \boldsymbol{z}}$ is Lipschitz continuous, followed by it is smoothness.

\section{Related Work}
\paragraph{Membership Inference Attacks (MIAs).}
Membership Inference Attacks (MIAs), first introduced by \citet{shokri2017membership} for machine learning, utilize multiple shadow models and a neural network-based attack model to identify a target model’s predictions on member versus non-member data, that is  NN-based attack\cite{zhang2023mida,nasr2019comprehensive}. Metric-based attack computes custom metrics such as Loss \cite{yeom2018privacy}, Confidence\cite{liu2019socinf}, Entropy \cite{salem2018ml}, Modified-Entropy \cite{song2021systematic}, and gradient norm \cite{leemann2023gaussian, nasr2019comprehensive, sablayrolles2019white} to derive a threshold for distinction. As an extension to metric-based attacks, recent works \cite{lopez2023scalable, carlini2022membership, ye2022enhanced, watson2021importance} design per-example attacks.
Boundary-based attacks \cite{li2021membership, choquette2021label} gauge the necessary perturbation magnitude for membership inference. Augmentation-based attacks \cite{choquette2021label,ko2023practical} leverage the resilience of training samples to data augmentation compared to testing samples. 

Beyond supervised classification, MIAs have been extended to additional fields, such as graph embedding models \cite{wang2023link, duddu2020quantifying}, graph neural network\cite{liu2022membership,conti2022label, olatunji2021membership}, generative models \cite{pang2023white, dubinski2024towards, van2023membership, hu2021membership, chen2020gan, liu2019performing}, contrastive learning \cite{liu2021encodermi, ko2023practical}, language models \cite{mattern2023membership,mireshghallah2022quantifying}.
\paragraph{Defend against MIAs.}
The existing defense technologies can be classified into four categories\cite{hu2023defenses}: regularization, transfer learning, information perturbation, and generative models-based. 

(\romannumeral1) Regularization: Many papers \cite{leino2020stolen,salem2018ml, yeom2018privacy, shokri2017membership} have pointed out that overfitting is a major factor in the success of MIAs, hence regularization
technology such as Label Smoothing \cite{guo2017calibration}, Confidence Penalty \cite{pereyra2017regularizing}, and Early Stopping \cite{yao2007early}, Adversary regularization \cite{nasr2018machine}, Dropout~\cite{srivastava2014dropout}, Pruning\cite{wang2020against}, HAMP \cite{chen2023overconfidence}, and RelaxLoss \cite{chen2022relaxloss} can certainly defend against MIAs. 

(\romannumeral2) Transfer Learning has been shown to effectively protect member privacy by using knowledge from similar but different data, reducing direct access to sensitive target data. In particular, Knowledge distillation \cite{mazzone2022repeated, hinton2015distilling, shejwalkar2021membership, zheng2021resisting, DBLP:conf/uss/TangMSSNHM22} uses a large teacher model to train a smaller student model, transferring the knowledge while retaining similar accuracy. Domain adaptation \cite{weiss2016survey, huang2021defense, huang2021damia} transfers knowledge from a source domain to a related but different target domain by extracting shared representations. 

(\romannumeral3) Information perturbation protects privacy information by adding customized noise to the data, training procession, and outputs. Specially, this methodology is typically classified into three methods: differential privacy, output perturbation, and data augmentation. Differential privacy \cite{tan2022parameters, dwork2008differential,chen2020differential,jayaraman2019evaluating,nasr2021adversary,rahman2018membership, kim2021federated, truex2019effects}  adds noise perturbations to the real data, ensuring that the results of an algorithm on adjacent datasets (differing by one element) are statistically indistinguishable.  Output perturbation \cite{xue2022use, jia2019memguard} based on the intuition that black-box MIAs can only utilize this output information to make inferences, so lightly altering the results returned by the model weakens the performance of MIAs. Data perturbation \cite{chen2021gan, wang2019miasec, kandpal2022deduplicating} add perturbations directly to the data, making it more challenging for attackers to infer whether specific data points were used in training the model. Techniques such as data augmentation, including rotation, clipping, and mix-up \cite{li2021mixup, kaya2021does}, can also fall into this category.

(\romannumeral4) Generative models-based methods \cite{chen2021protect,hu2022defending, paul2021defending} generated substitute training data using generative models to reduce information leakage. Our approach is a regularization technique integrated into the loss function.

\section{Conclusion}
In this paper, we introduce Convex-Concave Loss, a simple method for formulating loss functions that can help defend against MIAs. Specifically, we propose to integrate a concave term with CE loss to magnify loss variance. As a result, our proposed method could mitigate privacy risks by reducing the gap between training and testing loss distribution. Moreover, we present theoretical analyses of how convex and concave loss functions affect loss variance during optimization, which is the key insight enlightening and certifying our method. Extensive experiments show that CCL can improve the privacy-utility trade-off. 

A few open questions remain: Firstly, our method aims to increase the variance of output metrics, but its role in defending against label-only attacks (such as data augmentation attacks) remains unexplored. Moreover, our method cannot break the trade-off between utility and MIA defense, which might be a potential direction for future work. 

\section*{Acknowledgements}

This research is supported by the Shenzhen Fundamental Research Program (Grant No.JCYJ20230807091809020). Huiping Zhuang is supported by the National Natural Science Foundation of China (Grant No. 6230070401), Xiaofeng Cao is supported by the National Natural Science Foundation of China (Grant No.62206108). We gratefully acknowledge the support of the Center for Computational Science and Engineering at the Southern University of Science and Technology.

\section*{Impact Statement}
This paper presents work whose goal is to advance the field of Machine Learning. There are many potential societal consequences of our work, none of which we feel must be specifically highlighted here.

\bibliography{example_paper}

\bibliographystyle{icml2024}

\newpage
\appendix
\onecolumn



\section{Proof of Theorem~\ref{convex loss}} \label{proof for convex loss}
\begin{proof}
Since $\ell$ is strictly convex, $\ell^{\prime \prime} >0$, so there must exits infimum $B = \inf \ell^{\prime \prime}(x) \geqslant 0$.

\begin{align*} 
    \mathbb{E}_{\mathcal{D}}\ell(p_y) &=
    \mathbb{E}_{\mathcal{D}}[-\ell^{\prime}(1)(1-p_y) +\frac{1}{2} \ell^{\prime \prime}(\xi(p_y))(1-p_y)^2]  \\ 
    &\geqslant  -\ell^{\prime}(1) \mathbb{E}_{\mathcal{D}}[(1-p_y)] + \frac{B}{2} 
 \mathbb{E}_{\mathcal{D}}(1-p_y)^2 \\
    & =A\epsilon + \frac{B}{2}(\epsilon^2 + \sigma^2)
\end{align*}
where $A = -\ell^{\prime} (1)>0$, $B \geqslant 0$ is a non-negative lower bound of $\ell^{\prime \prime}(x)$.

which concludes the proof. $\hfill\square$
\end{proof}

\section{Proof of Theorem~\ref{concave loss}} \label{proof for concave loss}

\begin{proof}
 Since $\ell$ is concave, so $\ell^{\prime}(1) \leqslant \ell^{\prime}(x) < 0$.
Note that $\ell(1) =0$, then by Mean Value Theorem, we have $\ell(0) = \ell(0)-\ell(1) = - \ell^{\prime}(\xi_1) \leqslant -\ell^{\prime}(1)$.

Hence, $\ell$ is bound and continues in $[0,1]$, so $\ell^{\prime \prime}$ is also continues in $[0,1]$.


Suppose that the probability density function of $p_y$ is $f(x)$
\begin{align*} 
    \mathbb{E}_{\mathcal{D}}\ell(p_y) &=
    \mathbb{E}_{\mathcal{D}}[-\ell^{\prime}(1)(1-p_y) +\frac{1}{2} \ell^{\prime \prime}(\xi(p_y))(1-p_y)^2]  \\ 
    &= A\epsilon + \frac{1}{2} \mathbb{E}_{\mathcal{D}} [\ell^{\prime \prime}(\xi(p_y))(1-p_y)^2]  \\
    &= A\epsilon + \frac{1}{2} \int_{0}^{1} \ell^{\prime \prime}(\xi(x))f(x)(1-x)^2 dx\\
    &= A\epsilon +\frac{\ell^{\prime \prime} (\xi)}{2}\int_{0}^{1}f(x)(1-x)^2 dx \quad  \text{(By First Mean Value Theorem for Integration)} \\
    &= A\epsilon + B(\epsilon^2 + \sigma^2)
\end{align*}

which concludes the proof.

\end{proof}

\section{Connection between the loss variance and $\sigma^2$}
\label{app:variance_connection}
Notably, the metric commonly employed for MIA is typically a function of the probability vector $\mathbf{P} = (p(1|\boldsymbol{x}), p(2|\boldsymbol{x}), \dots, p(K|\boldsymbol{x}))^\top$. An example of such a metric is the entropy, given by $-\sum_{k=1}^{K}p(k|\boldsymbol{x}) \log p(k|\boldsymbol{x})$, which constitutes a mapping from $\mathbb{R}^K$ to $\mathbb{R}$. Consequently, we demonstrate that augmenting the variance $\mathrm{Var}(f(\boldsymbol{P}))$ is equivalent to amplifying $\mathrm{Var}({p_y})$ while maintaining a constant mean.

Consider $\mathbf{P} = (P_1,P_2,\dots, P_K)$ as a K-dimensional random vector, where $P_k$ represents the random variable for $p(k|\boldsymbol{x})$. Below we provide an approximate of $\mathrm{Var}(f(\boldsymbol{P}))$ by delta method \cite{oehlert1992note}.

\begin{lemma} \label{delta approx}
Let \( \mathbf{P}\) be a K-dimensional random vector in $\mathcal{P}$ and \( \boldsymbol{\mu} \) be the mean vector of \( \mathbf{P} \). Suppose \( f: \mathcal{P} \to \mathbb{R} \) is a differentiable function and \( \mathbf{J}(\boldsymbol{\mu}) \) is the Jacobian matrix of \( f \) evaluated at \( \boldsymbol{\mu} \), then the variance of \( f(\mathbf{P}) \) can be approximated by
\begin{equation}
    \mathrm{Var}(f(\mathbf{P})) \approx \mathbf{J}(\boldsymbol{\mu})^{\top} 
 \, \boldsymbol{\Sigma} \, \mathbf{J}(\boldsymbol{\mu})
\end{equation}
where $\boldsymbol{\Sigma} = \mathrm{Cov}(\mathbf{P})$ is covariance matrix of $\mathbf{P}$ and the  elements of \( \mathbf{J} \) are given by \( J_{i} = \frac{\partial f}{\partial P_i}(\boldsymbol{\mu}) \) for \( i = 1, \dots, K \).
\end{lemma}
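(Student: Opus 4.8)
The plan is to invoke the multivariate delta method, whose engine is a first-order Taylor expansion of $f$ about the mean vector $\boldsymbol{\mu}$. Since $f$ is differentiable on $\mathcal{P}$, I would write, for $\mathbf{P}$ near $\boldsymbol{\mu}$,
\begin{equation*}
    f(\mathbf{P}) \approx f(\boldsymbol{\mu}) + \mathbf{J}(\boldsymbol{\mu})^{\top}(\mathbf{P}-\boldsymbol{\mu}),
\end{equation*}
discarding the second- and higher-order terms in $(\mathbf{P}-\boldsymbol{\mu})$. This linearization is exactly where the approximation sign in the statement originates: it is accurate precisely when $\mathbf{P}$ concentrates tightly about $\boldsymbol{\mu}$, so that the quadratic remainder $\tfrac{1}{2}(\mathbf{P}-\boldsymbol{\mu})^{\top}\mathbf{H}(\boldsymbol{\mu})(\mathbf{P}-\boldsymbol{\mu})$ (with $\mathbf{H}$ the Hessian) is negligible relative to the linear term. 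In our setting $\mathbf{P}$ is a probability vector whose coordinates fluctuate over the data distribution, and the regime of interest is small covariance, so this truncation is the natural approximation.

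Next I would take the variance of both sides. The leading term $f(\boldsymbol{\mu})$ is a deterministic constant and contributes nothing to the variance, so
\begin{equation*}
    \mathrm{Var}\big(f(\mathbf{P})\big) \approx \mathrm{Var}\big(\mathbf{J}(\boldsymbol{\mu})^{\top}(\mathbf{P}-\boldsymbol{\mu})\big).
\end{equation*}
Since $\mathbf{J}(\boldsymbol{\mu})$ is a fixed (non-random) vector evaluated at the mean, the right-hand side is the variance of a fixed linear functional of the centered random vector $\mathbf{P}-\boldsymbol{\mu}$. The final step is the standard quadratic-form identity: for any constant vector $\mathbf{a}$ and random vector $\mathbf{X}$, $\mathrm{Var}(\mathbf{a}^{\top}\mathbf{X}) = \mathbf{a}^{\top}\mathrm{Cov}(\mathbf{X})\,\mathbf{a}$. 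Applying this with $\mathbf{a}=\mathbf{J}(\boldsymbol{\mu})$ and $\mathbf{X}=\mathbf{P}-\boldsymbol{\mu}$, and noting $\mathrm{Cov}(\mathbf{P}-\boldsymbol{\mu}) = \mathrm{Cov}(\mathbf{P}) = \boldsymbol{\Sigma}$, yields
\begin{equation*}
    \mathrm{Var}\big(f(\mathbf{P})\big) \approx \mathbf{J}(\boldsymbol{\mu})^{\top}\,\boldsymbol{\Sigma}\,\mathbf{J}(\boldsymbol{\mu}),
\end{equation*}
which is the claim.

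I do not expect a serious obstacle in the linear-algebra portion, which is entirely routine. The only genuinely delicate point is the Taylor truncation itself: strictly, this is an asymptotic/approximate statement rather than an identity, so any fully rigorous version would require either a formal small-fluctuation assumption or an explicit control on the neglected remainder term via a bound on the Hessian over $\mathcal{P}$. Given that the lemma is stated with an approximation sign and is used only to argue that increasing $\mathrm{Var}(p_y)$ increases $\mathrm{Var}(f(\mathbf{P}))$ at fixed mean, I would present the first-order argument and flag the remainder as the higher-order term that is dropped, rather than grinding through a quantitative error bound.
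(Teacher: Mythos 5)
Your proposal is correct and is essentially the paper's approach: the paper states this lemma without a written proof, simply invoking the multivariate delta method (citing Oehlert, 1992), and your first-order Taylor expansion about \(\boldsymbol{\mu}\) followed by the quadratic-form identity \(\mathrm{Var}(\mathbf{a}^{\top}\mathbf{X}) = \mathbf{a}^{\top}\mathrm{Cov}(\mathbf{X})\,\mathbf{a}\) is exactly the standard derivation underlying that citation. Your remark that the approximation sign absorbs the neglected higher-order remainder is also the right way to read the statement as used in the paper.
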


Based on Lemma \ref{delta approx}, we can establish a relationship in variance between the confidence of true label $p_y$ and metrics used for MIA with respect to prediction probability. Assuming that the output of a neural network after a softmax layer follows Dirichlet distribution \cite{malinin2018predictive}, then we explore the connection between the loss variance and $\sigma^2$, as shown below.

\begin{proposition} \label{dirichlet}
Let $\mathbf{P}$ follows Dirichlet distribution in the simplex \( \Delta^{K-1} \), where \( \Delta^{K-1} = \left\{ (x_1, x_2, \dots, x_K) \in \mathbb{R}^K \mid x_i \geq 0, \sum_{i=1}^{K} x_i = 1 \right\} \). With a fixed $\mathbb{E} \mathbf{P} = \boldsymbol{\mu}$, for any $i \in \mathcal{Y}$, an increase in \( \mathrm{Var}(P_i) \) implies a corresponding increase in \( \mathrm{Var}(f(\mathbf{P}))\)
\end{proposition}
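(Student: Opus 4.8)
The plan is to exploit the special structure that fixing the mean imposes on a Dirichlet distribution, and then feed the resulting covariance matrix into the delta-method approximation of Lemma~\ref{delta approx}. Recall that if $\mathbf{P} \sim \mathrm{Dir}(\boldsymbol{\alpha})$ with $\alpha_0 = \sum_k \alpha_k$, then $\mathbb{E}[P_i] = \alpha_i/\alpha_0$ and the covariance entries are $\mathrm{Cov}(P_i,P_j) = (\mu_i\delta_{ij} - \mu_i\mu_j)/(\alpha_0+1)$. The crucial observation is that once we impose the constraint $\mathbb{E}\mathbf{P} = \boldsymbol{\mu}$, we are forced to take $\alpha_i = \mu_i\alpha_0$, so the only remaining degree of freedom in the family is the scalar concentration $\alpha_0$.

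First I would write the covariance matrix as $\boldsymbol{\Sigma} = \beta\,M$, where $\beta := 1/(\alpha_0+1)$ and $M := \mathrm{diag}(\boldsymbol{\mu}) - \boldsymbol{\mu}\boldsymbol{\mu}^{\top}$ is a \emph{fixed} matrix determined entirely by $\boldsymbol{\mu}$. In this parametrization the variance of any single coordinate is $\mathrm{Var}(P_i) = \beta\,\mu_i(1-\mu_i)$. Since $\mu_i(1-\mu_i) > 0$ is a positive constant (assuming $0 < \mu_i < 1$), the map $\beta \mapsto \mathrm{Var}(P_i)$ is strictly increasing; thus ``increasing $\mathrm{Var}(P_i)$'' is precisely equivalent to ``increasing $\beta$'' (equivalently, decreasing $\alpha_0$), and this equivalence holds uniformly for every index $i$.

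Next I would invoke Lemma~\ref{delta approx} to write $\mathrm{Var}(f(\mathbf{P})) \approx \mathbf{J}(\boldsymbol{\mu})^{\top}\boldsymbol{\Sigma}\,\mathbf{J}(\boldsymbol{\mu}) = \beta\,\mathbf{J}(\boldsymbol{\mu})^{\top}M\,\mathbf{J}(\boldsymbol{\mu})$. Because the Jacobian is evaluated at the fixed point $\boldsymbol{\mu}$ and $M$ is a fixed matrix, the scalar $C := \mathbf{J}(\boldsymbol{\mu})^{\top}M\,\mathbf{J}(\boldsymbol{\mu})$ is a constant independent of $\beta$; moreover $M$ is positive semidefinite, being a scalar multiple of a genuine covariance matrix (indeed $v^{\top}Mv = \sum_i \mu_i v_i^2 - (\sum_i \mu_i v_i)^2 \geqslant 0$), so $C \geqslant 0$. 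Combining the two displays yields $\mathrm{Var}(f(\mathbf{P})) \approx C\,\beta = \tfrac{C}{\mu_i(1-\mu_i)}\,\mathrm{Var}(P_i)$, a non-negative constant times $\mathrm{Var}(P_i)$, which shows that any increase in $\mathrm{Var}(P_i)$ produces a corresponding increase in $\mathrm{Var}(f(\mathbf{P}))$.

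The main obstacle is conceptual rather than computational: one must recognize that fixing the Dirichlet mean collapses the entire family to a one-parameter curve governed by $\alpha_0$, so that all coordinate variances and the full covariance matrix scale together through the single common factor $\beta$. This coupling is exactly what makes the hypothesis ``increase $\mathrm{Var}(P_i)$ for some $i$'' well-posed and equivalent to a uniform rescaling of $\boldsymbol{\Sigma}$; without it, perturbing a single coordinate's variance would not determine the covariance matrix and the monotone conclusion could fail. A secondary point to state explicitly is that the conclusion is approximate, inherited from the first-order delta-method expansion of Lemma~\ref{delta approx}, and that the argument requires $0 < \mu_i < 1$ so the denominator $\mu_i(1-\mu_i)$ does not vanish.
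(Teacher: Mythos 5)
Your proof is correct and takes essentially the same route as the paper's: both exploit the fact that fixing the Dirichlet mean forces every covariance matrix in the family to be a scalar multiple of the fixed matrix $\mathrm{diag}(\boldsymbol{\mu}) - \boldsymbol{\mu}\boldsymbol{\mu}^{\top}$ (the paper phrases this as $\boldsymbol{\Sigma}^{(1)} = \tfrac{\beta_0+1}{\alpha_0+1}\boldsymbol{\Sigma}^{(2)}$ for two distributions with concentrations $\alpha_0,\beta_0$), and then plug this scaling into the delta-method quadratic form of Lemma~\ref{delta approx}. Your one-parameter packaging via $\beta = 1/(\alpha_0+1)$ is a cleaner presentation of the same idea, and your explicit positive-semidefiniteness check of $M$ together with the caveats $0<\mu_i<1$ and $C\geqslant 0$ (rather than $C>0$) makes precise two points the paper leaves implicit.
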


\begin{proof}
Suppose $\mathbf{P_1} \sim \mathrm{Dirichlet}(\alpha_1, \alpha_2, \dots, \alpha_K)$ and $\mathbf{P_2} \sim \mathrm{Dirichlet}(\beta_1, \beta_2, \dots, \beta_K)$ such that $\mathbb{E} \mathbf{P_1} = \mathbb{E} \mathbf{P_2} = \boldsymbol{\mu}$ and $\mathrm{Var}(P_{2,t}) > \mathrm{Var}(P_{1,t})$, where $P_{i,j}$ is the $j$-th element of $\mathbf{P_i}$ and $t \in {\mathcal{Y}}$.

By $\mathbb{E} \mathbf{P_1} = \mathbb{E} \mathbf{P_2}$, we have 

\begin{equation*}
    \mathbb{E} [P_{1,j}] = \frac{\alpha_j}{\alpha_0} = 
    \frac{\beta_j}{\beta_0} = \mathbb{E} [P_{2,j}], \quad \text{for any $j=1,2,\dots, K$}
\end{equation*}
where $\alpha_0 = \sum_{j=1}^{K} \alpha_j$ and $\beta_0 = \sum_{j=1}^{K} \beta_j$.

By the condition $\mathrm{Var}(P_{2,t}) > \mathrm{Var}(P_{1,t})$, we have

\begin{align*}
        \mathrm{Var}(P_{2,t}) > \mathrm{Var}(P_{1,t})  
      \Longrightarrow    \frac{(\tilde{\alpha}_t)(1-\tilde{\alpha}_t)}{\alpha_0 +1 }>
        \frac{(\tilde{\beta}_t)(1-\tilde{\beta}_t)}{\beta_0+1} 
        \Longrightarrow  \beta_0 > \alpha_0
\end{align*}

where $\tilde{\alpha}_t = \frac{\alpha_t}{\alpha_0} = \frac{\beta_t}{\beta_0} =\tilde{\beta}_t$.

It follows that 
\begin{equation*}
    \mathbf{\Sigma}^{(1)}_{ij} =  \mathrm{Cov}(P_{1,i} , P_{1,j}) = \frac{-\tilde{\alpha}_i \tilde{\alpha}_j } {\alpha_0 +1} > \frac{-\tilde{\beta}_i\tilde{\beta}_j}{\beta_0+1} = \mathrm{Cov}(P_{2,i} , P_{2,j}) = \mathbf{\Sigma}^{(2)}_{ij}, \quad \text{for any $i \ne j$}
\end{equation*}

\begin{align*}
    \mathrm{Var}(P_{2,j}) > \mathrm{Var}(P_{1,j}), \quad \text{for any $j=1,2,\dots, K$} 
\end{align*}

This implies

\begin{align*}
    \mathbf{J}(\boldsymbol{\mu})^{\top} 
 \, \boldsymbol{\Sigma}^{(1)} \, \mathbf{J}(\boldsymbol{\mu}) 
  &= \mathbf{J}(\boldsymbol{\mu})^{\top} 
 \,  [ \frac{\beta_0 +1}{\alpha_0 +1}  \boldsymbol{\Sigma}^{(2)} ] \, \mathbf{J}(\boldsymbol{\mu}) \\
 &> \mathbf{J}(\boldsymbol{\mu})^{\top} 
 \,   \boldsymbol{\Sigma}^{(2)} \, \mathbf{J}(\boldsymbol{\mu}) 
\end{align*}

 By Lemma \ref{delta approx}, we have $\mathrm{Var}(f(\mathbf{P})) \approx \mathbf{J}(\boldsymbol{\mu})^{\top} 
 \, \boldsymbol{\Sigma} \, \mathbf{J}(\boldsymbol{\mu})$, so we can conclude the proof. $\hfill\square$

\end{proof}

According to Proposition \ref{dirichlet}, any metric related to \( f(\mathbf{P}) \)—including entropy and modified-entropy \cite{song2021systematic}—will exhibit increased variance in response to a rise in the variance of \( p_y \). Building upon this insight and in conjunction with \ref{metric adv}, it can be deduced that amplifying the variance of \( p_y \) during the training phase can enhance the model's resilience against metric-based MIA.

\section{Proof of Lemma \ref{R1} and Lemma \ref{R2}} 
\label{lipschitz}
\subsection{Proof of Lemma \ref{R1}}
\begin{proof}
Given that $A(x)$ and $B(x)$ are Lipschitz continuous, for all $x, y \in D$, we have

\begin{align}
\Vert A(x) - A(y)\Vert &\leq L_A \Vert x - y\Vert \newline
\Vert B(x) - B(y)\Vert &\leq L_B \Vert x - y\Vert
\end{align}

Consider the product $C(x) = A(x)B(x)$. For any $x, y \in D$,

$$
\begin{aligned}
\Vert C(x) - C(y)\Vert &= \Vert A(x)B(x) - A(y)B(y)\Vert  \\
&= \Vert A(x)B(x) - A(x)B(y) + A(x)B(y) - A(y)B(y)\Vert \\
&= \Vert A(x)(B(x) - B(y)) + (A(x) - A(y))B(y)\Vert \\
&\leqslant  \Vert A(x)\Vert \cdot \Vert B(x) - B(y)\Vert + \Vert A(x) - A(y)\Vert \cdot \Vert B(y)\Vert \\
&\leqslant  M_A L_B \Vert x - y\Vert + L_A M_B \Vert x - y\Vert \\
&=  (M_A L_B + L_A M_B)\Vert x - y\Vert
\end{aligned}
$$

Therefore, proving that $C(x) = A(x)B(x)$ is Lipschitz continuous with a Lipschitz constant $L_C = M_A L_B + L_A M_B$, under the given conditions of boundedness and Lipschitz continuity of $A(x)$ and $B(x)$.
\end{proof}

\subsection{Proof of Lemma \ref{R2}}
Since $f$ and $g$ are Lipschitz continuous, we have for all $x, y \in \mathbb{R}^n$,

$$\Vert f(x) - f(y)\Vert \leq L_f \Vert x - y\Vert,$$

and for all $u, v \in \mathbb{R}^m$,

$$\Vert g(u) - g(v)\Vert \leq L_g \Vert u - v\Vert.$$

Consider two points $x, y \in \mathbb{R}^n$. We want to show that $h$ is Lipschitz continuous, i.e., there exists a constant $L_h$ such that

$$\Vert h(x) - h(y)\Vert = \Vert g(f(x)) - g(f(y))\Vert \leq L_h \Vert x - y\Vert.$$

Using the Lipschitz continuity of $g$ and then $f$, we have

$$\Vert g(f(x)) - g(f(y))\Vert \leq L_g \Vert f(x) - f(y)\Vert \leq L_g L_f \Vert x - y\Vert.$$

Setting $L_h = L_g L_f$, we see that

$$\Vert h(x) - h(y)\Vert \leq L_h \Vert x - y\Vert,$$

proving that the composition $h = g \circ f$ is Lipschitz continuous with Lipschitz constant $L_h = L_g L_f$.

\section{Can CCL improve other convex functions?} \label{other convex functions}

Our method defined in Equation \ref{ConcaveLoss} just integrates CE as the convex loss function. There we show that our method can also improve other convex loss functions such as Focal Loss \cite{lin2017focal}. 

\begin{equation*}
\ell = \alpha \ell_{\mathrm{FL}} + (1- \alpha) \tilde{\ell} 
\end{equation*}
where $\ell_{\mathrm{FL}}$ is Focal Loss and $\tilde{\ell} \in \mathcal{F}$.

\begin{figure*}
    \centering
    \includegraphics[width=0.6\linewidth]{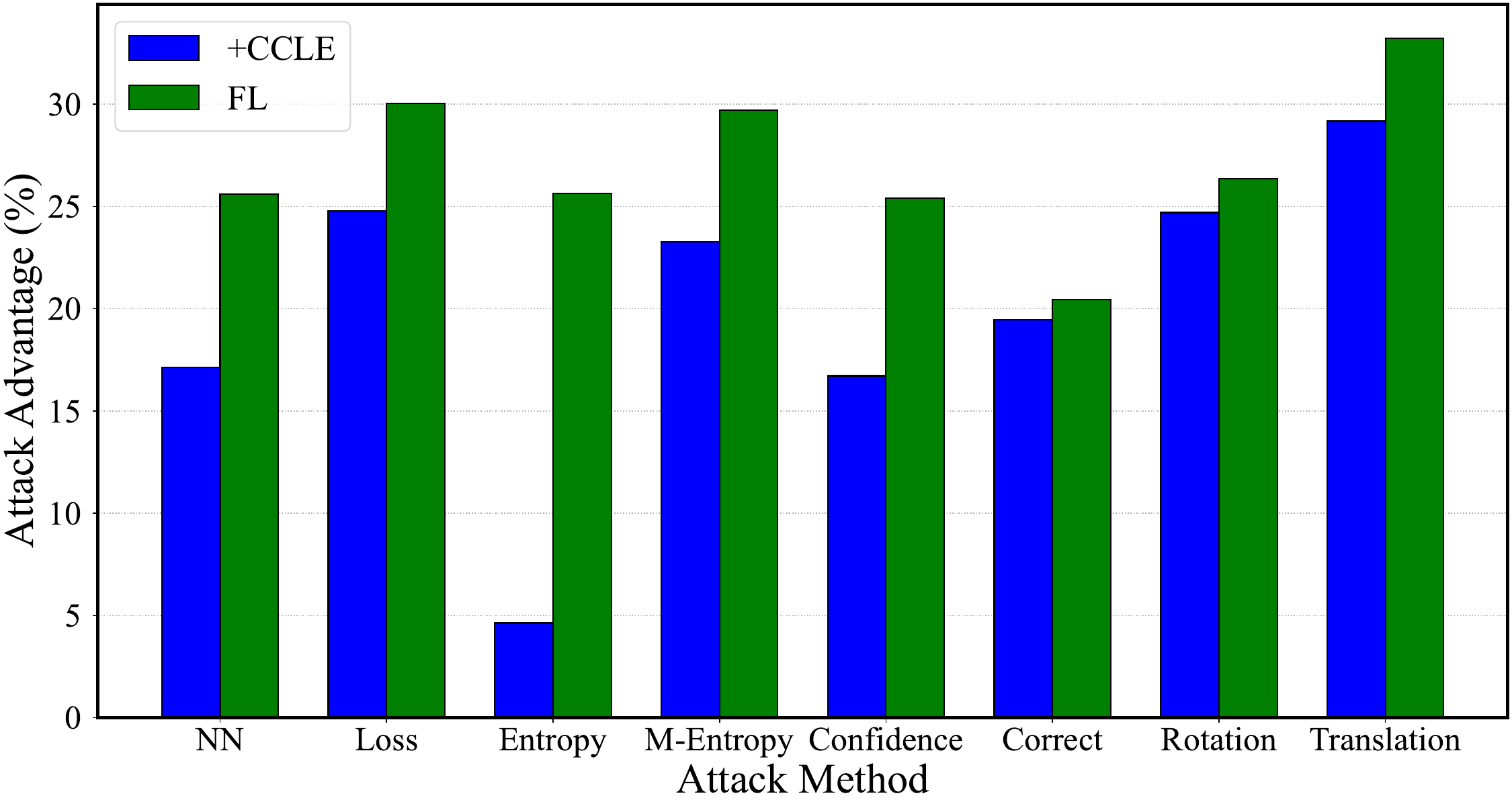}
    \caption{Attack advantage comparison among focal loss (FL) and focal loss equipped with CLE (+CCLE) on CIFAR-10.} 
    \label{fig:cifar10_focal}
\end{figure*}

In the experiments, we use Resnet-34 trained on CIFAR-10. For focal loss, we fixed $\gamma=2$. In particular, we use CCLE and select the best $\alpha$ in \{0.2,0.4,0.6,0.8\} with restricted condition that performs better utility. Our results in Figure \ref{fig:cifar10_focal} show that focal loss equipped with a concave term helps defend MIA across eight attack methods.

\section{Defense Methods with Hyperparameter} \label{sec_exp_setting}


\subsection{Other Defense Methods}
\paragraph{RelaxLoss.} RelexLoss \cite{chen2022relaxloss} reduces the gap between the member and non-member loss distribution by applying gradient ascent as long as the average loss of the current batch is smaller than $\alpha$. We vary the $\alpha$ over \{0.01, 0.04, 0.1, 0.2, 0.4, 0.8, 1.6, 3.2 \}.

\paragraph{Mixup+MMD.}  Mixup+MMD integrates the Maximum Mean Discrepancy (MMD) approach with mix-up training techniques. Specifically, MMD serves as a metric for quantifying the divergence between two empirical data distributions, functioning as follows
$$
\text{Distance}(X, Y) = \left\| \frac{1}{n} \sum_{i=1}^{n} \phi(x_i) - \frac{1}{m} \sum_{j=1}^{m} \phi(y_j) \right\|_H
$$
where $\phi(\dot)$ is Gaussian kernel function. MMD regularization loss is calculated by a batch of training and validation instances. We vary the weight of the MMD term across \{0.01, 0.02, 0.05, 0.1, 0.2, 0.5, 1, 2, 4, 8\}.

\paragraph{Adversary Regularization.} Adversary regularization \cite{nasr2018machine} conducts a min-max game optimization and an adversarial training algorithm that minimizes classification loss while also reducing the maximum gain of potential membership inference attacks. We vary the weight of the adversarial loss over \{0.8,1.0,1.2,1.4,1.6,1.8\}.

\paragraph{Dropout.}  Dropout randomly deactivates a subset of neurons in a layer with a given probability during training. In our experiments, dropout is applied specifically to the last fully connected layer of each target model. We vary the probability of dropout over \{0.01, 0.02, 0.05, 0.1, 0.3, 0.5, 0.7, 0.9\}.

\paragraph{Label Smoothing.} Label smoothing \cite{guo2017calibration} modifies the target labels, making them slightly less confident by replacing the hard 0 and 1 targets with values slightly closer to a uniform distribution. We vary the smoothing parameter over \{0.1, 0.2, 0.3, 0.4, 0.5, 0.6, 0.7, 0.8, 0.9 \}.

\paragraph{Confidence Penalty.}  Confidence penalty \cite{pereyra2017regularizing} mitigates overfitting by penalizing low entropy in the output distributions of neural networks. In particular, it is implemented by adding an entropy regularization term to the objective. We vary the weight of the regularization term over \{0.1, 0.3, 0. 5, 1, 2, 4, 8\}.

\paragraph{Early Stopping.} Early Stopping \cite{yao2007early} monitors the model's performance on a validation set and stops the training process when performance begins to degrade. Following the implementation of \citet{chen2022relaxloss}, our experiments save checkpoints at specific epochs: \{25, 50, 75, 100, 125, 150, 175, 200, 225, 250, 275\}

\end{document}